\newcommand\seednodes{\mathcal{S}}
\algnewcommand\algorithmicinput{\textbf{Input:}}
\algnewcommand\INPUT{\item[\algorithmicinput]}
\algnewcommand\algorithmicoutput{\textbf{Output:}}
\algnewcommand\OUTPUT{\item[\algorithmicoutput]}
\newglossaryentry{latex}
{
    name=latex,
    description={LaTeX (short for Lamport TeX) is a document preparation system. The user has to think 
    	about only the content to put in the document and the software will take care of the formatting. }
}
\newglossaryentry{glsy}
{
    name=glossary,
    description={Acronyms and terms which are generally unknown or new to common readers.}
}
\newglossaryentry{minimum}
{
	name=minimum,
	description={Given a set of real numbers, the minimum is the smallest of those numbers.}
}
\newglossaryentry{maximum}
{
	name=maximum,
	description={Given a set of real numbers, the maximum is the largest of those numbers.}
}
\newglossaryentry{mean}
{
	name=mean,
	description={The expectation of a real-valued random variable.}
}
\newglossaryentry{variance}
{
	name={variance},
	description={The variance $V$ of a real-valued \gls{rv} $x$ is defined as the expectation $\expect\big\{ \big( x - \expect\{x \} \big)^{2} \big\}$ 
		of the squared difference $x$ and its expectation $\expect\{x \}$. We extend this definition to vector-valued \gls{rv}s $\featurevec$ 
		as $\expect\big\{ \big\| \featurevec - \expect\{\featurevec \} \big\|_{2}^{2} \big\}$.} ,first={variance},text={variance} 
}
\newglossaryentry{nn}
{
	name={nearest neighbour},
	description={Nearest neighbour methods learn a hypothesis $h: \featurespace \rightarrow \labelspace$ whose 
		function value $h(\featurevec)$ is solely detemined by the nearest neighbours in the 
		feature space $\featurespace$ },
	first={nearest neighbour (NN)},text={NN} 
}
\newglossaryentry{bias}
{
	name={bias},
	description={Consider some unknown quantity $\bar{\weight}$, e.g., the true weight in a linear model $\truelabel = \bar{\weight} \feature + e$ 
		relating feature and label of a \gls{datapoint}. We might use an ML method (e.g., based on \gls{erm}) to 
		compute an estimate $\hat{\weight}$ for the $\bar{\weight}$ based on a set of \gls{datapoint}s that are 
		realizations of \gls{rv}s. The (squared) bias incurred by the estimate $\hat{\weight}$ is typically defined as 
		$\biasterm^{2} \defeq \big( \expect \{ \hat{\weight}  \}- \bar{\weight}\big)^{2}$. We extend this definition to 
		vector-valued quantities using the squared Euclidean norm $\biasterm^{2} \defeq \big\| \expect \{ \widehat{\weights}  \}- \overline{\weights}\big\|_{2}^{2}$.},
first={bias},text={bias} 
}
\newglossaryentry{classification}
{
	name={classification},
	description={Classification is the task of determining a discrete-valued label $\truelabel$ of a \gls{datapoint} 
		based solely on its features $\featurevec$. The label $\truelabel$ belongs to a finite set, such as $\truelabel \in \{ -1,1\}$, 
		or $\truelabel \in \{1,\ldots,19\}$ and represents a category to which the corresponding \gls{datapoint} belongs to.},first={classification},text={classification} 
}
\newglossaryentry{condnr}
{
	name={condition number},
	description={The condition number $\kappa(\mathbf{Q})$ of a \gls{psd} matrix $\mathbf{Q}$ is 
		the ratio of the largest to the smallest eigenvalue of $\mathbf{Q}$.},first={condition number},text={condition number} 
}
\newglossaryentry{classifier}
{
	name={classifier},
	description={A classifier is a hypothesis $h(\featurevec)$ that is used to predict a discrete-valued label. 
		Strictly speaking, a classifier is a hypothesis $h(\featurevec)$ that can take only a finite number of 
		different values. However, we are sometimes sloppy and use the term classifier also for a hypothesis 
		that delivers a real number which is thresholded to obtain the predicted label value. 
		For example, in a binary classification problem with label values $\truelabel \in \{ -1,1\}$, we refer to a 
		linear hypothesis $h(\featurevec) =\weights^{T}\featurevec$ as classifier if it is used to predict the label 
		value according to $\hat{\truelabel} = 1$ when $h(\featurevec) \geq 0$  and $\hat{\truelabel}=-1$ otherwise.},first={classifier},text={classifier} 
}
\newglossaryentry{emprisk}
{name={empirical risk},
  description={The empirical risk of a given hypothesis on a given set of datapoints is the average loss of the hypothesis computed over all datapoints in that set.},first={empirical risk},text={empirical risk} 
}
\newglossaryentry{risk}
{name={risk},
	description={Consider a hypothesis $h$ that is used to predict the label $\truelabel$ of a \gls{datapoint} based on its features $\featurevec$. 
	We measure the quality of a particular prediction using a \gls{lossfunc} $\lossfun$. 
	The risk of a hypothesis is defined as the expected loss when it is applied to a random \gls{datapoint} $\big(\featurevec,\truelabel\big)$ with 
 joint probability distribution $p(\featurevec,\truelabel)$. Note that the risk of $h$ depends on both, the specific choice for the \gls{lossfunc} and 
the joint proability distribution.},
	first={risk},text={risk} 
}
\newglossaryentry{actfun}
{name={activation function},
	description={Each artificial neuron within an \gls{ann} consists of an activation function that maps 
		the inputs of the neuron to a single output value. In general, an activation function is a non-linear 
		map of the weighted sum of neuron inputs (this weighted sum is the activation of the neuron).},
first={activation function},text={activation function} 
}
\newglossaryentry{deepnet}
{name={deep net},
	description={We refer to an \gls{ann} with a (relatively) large number of hidden layers as a deep \gls{ann} or ``deep net''. Deep nets 
	are used to represent the \gls{hypospace}s of deep learning methods \cite{Goodfellow-et-al-2016}.},
	first={deep \gls{ann} (deep net)},text={deep net} 
}
\newglossaryentry{baseline}
{name={baseline},
	description={A reference value or benchmark for the average loss incurred by a hypothesis when applied to the \gls{datapoint}s 
		generated in a specific ML application. Such a reference value might be obtained from human performance (e.g., error rate 
		of dermatologists diagnosing cancer from visual inspection of skin areas) or other ML methods (``competitors'')}, 
	first={baseline},text={baseline} 
}
\newglossaryentry{esterr}
{name={estimation error},
	description={Consider \gls{datapoint}s with feature vectors $\featurevec$ and label 
		$\truelabel$. In some applications we can model the relation between features and label of a \gls{datapoint} 
		as $\truelabel = \bar{h}(\featurevec) + \varepsilon$. Here we used some true hypothesis $\bar{h}$ and a noise 
		term $\varepsilon$ which might represent modelling or labelling errors. The estimation error incurred by a ML 
		method that learns a hypothesis $\hat{h}$, e.g., using \gls{erm}, is defined as $\hat{h} - \bar{h}$. 
		For a parametrized hypothesis space, consisting of hypothesis maps that are determined by a parameter vector $\weights$, 
		we define the estimation error in terms of parameter vectors as $\Delta \weights = \widehat{\weights} - \overline{\weights}$.}
	first={estimation error},text={estimation error} 
}
\newglossaryentry{msee}
{name={mean squared estimation error},
	description={Consider a ML method that uses a parametrized hypothesis space. For a given \gls{trainset}, whose \gls{datapoint}s are 
		interpreted as realizations of \gls{rv}s, the ML method learns the parameters incurring the \gls{esterr} $\Delta \weights$.   
		The mean squared estimation error is defined as the expectation $\expect \big\{ \big\| \Delta \weights \big\|^{2} \big\} $ 
		of the squared Euclidean norm of the \gls{esterr}.},
	first={mean squared estimation error (MSEE)},text={MSEE} 
}
\newglossaryentry{expert}
{name={expert},
	description={ML aims at learning a hypothesis $h$ that accurately predicts the label 
		of a \gls{datapoint} based on its features. We measure the prediction error using 
		some \gls{lossfunc}. Ideally we want to find a hypothesis that incurres minimum loss. 
		One approach to make this goal precise is to use the \gls{iidasspt} and use the resulting 
		\gls{bayesrisk} as the benchmark level for the (average) loss of a hypothesis. Alternatively 
		we might know a reference or benchmark hypothesis $h'$ which might be obtained by 
		some existing ML mehtod. We can then compare the loss incurred by $h$ with the loss 
		incurred by $h'$. Such a reference or baseline hypothesis $h'$ is refered to as an \gls{expert}. 
		Note that an expert might deliver very poor predictions. We typically compare against many 
	   different experts and aim at incurring not much more loss than the best among those experts (this is 
	   known as regret minimization) \cite{PredictionLearningGames,HazanOCO}.}
	first={expert},text={expert} 
}
\newglossaryentry{regret}
{name={regret},
	description={The regret of a hypothesis $h$ relative to another hypothesis $h'$, which serves as a reference of baseline, 
		      is the difference between the \gls{loss} incurred by $h$ and the \gls{loss} incurred by $h'$ \cite{PredictionLearningGames}. 
		      The baseline hypothesis $h'$ is also refered to as an \gls{expert}.}
	first={regret},text={regret} 
}
\newglossaryentry{differentiable}
{name={differentiable},
	description={A function $f: \mathbb{R}^{\featuredim} \rightarrow \mathbb{R}$  is differentiable if it 
		has a \gls{gradient} $\nabla f ( \mathbf{x})$ everywhere (for every $\mathbf{x} \in \mathbb{R}^{\featuredim}$).},
	first={differentiable},text={differentiable} 
}
\newglossaryentry{gradient}
{name={gradient},
	description={For a real-valued function $f: \mathbb{R}^{\featuredim} \rightarrow \mathbb{R}: \weights \mapsto f(\weights)$, 
	a vector $\va$ such that $\lim_{\weights \rightarrow \weights'} \frac{f(\weights) - \big(f(\weights')+ \va^{T} (\weights- \weights') \big) }{\| \weights-\weights'\|}=0$ is 
	referred to as the gradient of $f$ at $\weights'$. If such a vector exists it is denoted $\nabla f(\weights')$ or $\nabla f(\weights)\big|_{\weights'}$  .},
	first={gradient},text={gradient} 
}
\newglossaryentry{subgradient}
{name={subgradient},
description={For a real-valued function $f: \mathbb{R}^{\featuredim} \rightarrow \mathbb{R}: \weights \mapsto f(\weights)$, 
		a vector $\va$ such that $f(\weights) \geq  f(\weights') +\big(\weights-\weights' \big)^{T} \va$ is 
		referred to as a subgradient of $f$ at $\weights'$.},
	first={subgradient},text={subgradient} 
}
\newglossaryentry{relu}
{name={ReLU},
	description={The rectified linear unit or ``ReLU'' is a popular choice for the \gls{actfun} of a neuron within an \gls{ann}. 
	It is defined as $g(z) = \max\{0,z\}$ with $z$ being the weighted input of the neuron.}, first = {rectified linear unit (ReLU)}, text={ReLU} 
}
\newglossaryentry{hypothesis}
{name={hypothesis},
	description={A map (or function) $h: \featurespace \rightarrow \labelspace$ from the 
		feature space $\featurespace$ to the label space $\labelspace$. 
		Given a \gls{datapoint} with features $\featurevec$ we use a hypothesis map 
		to estimate (or approximate) the label $y$ using the predicted label $\hat{\truelabel} = h(\featurevec)$. 
		ML is about learning (finding) a hypothesis map such that $\hat{\truelabel} \approx h(\vx)$ for any \gls{datapoint}.},
	first={hypothesis},text={hypothesis}  
}
\newglossaryentry{vcdim}
{name={Vapnik–Chervonenkis (VC) dimension},
	description={The VC dimension is maybe the most widely used concept for measuring the size of infinite \gls{hypospace}s. 
		For a precise definition of the VC dimension and discussion of its applications 
		in ML we refer to \cite{ShalevMLBook}. },
	first={Vapnik–Chervonenkis (VC) dimension},text={VC dimension}  
}
\newglossaryentry{effdim}
{name={effective dimension},
	description={The effective dimension $\effdim{\hypospace}$ of an infinite \gls{hypospace} $\hypospace$ 
		is a measure of its size. Roughly speaking the effective dimension is equal to the number of independent 
		tunable parameters of the model. These parameters might be the weights used by linear map or 
		the weights and bias terms of an \gls{ann}.},
	first={effective dimension},text={effective dimension}  
}
\newglossaryentry{labelspace}
{name={label space},
	description={Consider a ML application that involves \gls{datapoint}s characterized by features 
		and labels. The label space of a given ML application or method is constituted by all potential 
		values that the label of a \gls{datapoint} can take on. A popular choice for the label space in 
		regression problems (or methods) is $\labelspace = \mathbb{R}$. Binary classification problems 
		(or methods) use a label space that consists of two different elements, e.g., $\labelspace =\{-1,1\}$, $\labelspace=\{0,1\}$ 
		or $\labelspace = \{ \mbox{``cat image''}, \mbox{''no cat image''} \}$  }, first={label space},text={label space}  
}
\newglossaryentry{bootstrap}
{name={bootstrap},
	description={Consider a probabilistic model that interprets a given set of \gls{datapoint}s $\dataset = \big\{ \datapoint^{(1)},\ldots,\datapoint^{(\samplesize)}\big\}$ as realizations of \gls{iid} \gls{rv}s with a common probability distribution $p(\datapoint)$. 
	The bootstrap uses the histogram of $\dataset$ as the underlying proability distribution $p(\datapoint)$. 
	},
	first={bootstrap},text={bootstrap}  
}
\newglossaryentry{featurespace}
{name={feature space},
	description={
		The feature space of a given ML application or method is constituted by all potential 
		values that the feature vector of a \gls{datapoint} can take on. Within this book the most 
		frequently used choice for the feature space is the \gls{euclidspace} $\mathbb{R}^{\featuredim}$ 
		with dimension $\featurelen$ being the number of individual features of a \gls{datapoint}.},
	first={feature space},text={feature space}  
}
\newglossaryentry{missingdata}
{name={missing data},
	description={By missing data, we refer to a situation where some feature values of a 
		subset of \gls{datapoint}s are unknown. Data imputation techniaues aim at estimating 
		(predicting) these missing feature values \cite{Abayomi2008DiagnosticsFM}. },
	first={missing data},text={missing data}  
}
\newglossaryentry{psd}
{name={positive semi-definite},
	description={A symmetric matrix $\mQ = \mQ^{T} \in \mathbb{R}^{\featuredim \times \featuredim}$ 
		is referred to as positive semi-definite if $\vx^{T} \mQ \vx \geq 0$ for every vector $\vx \in \mathbb{R}^{\featuredim}$.},
	first={positive semi-definite (psd)},text={psd}  
}
\newglossaryentry{features}
{name={features},
	description={Features are those properties of a \gls{datapoint} that can be measured or computed in an 
		automated fashion. For example, if a \gls{datapoint} is a bitmap image, then we could use 
		the red-green-blue intensities of its pixels as features. Some widely used synonyms for 
		the term feature are ``covariate'',``explanatory variable'', ``independent variable'', ``input (variable)'', ``predictor (variable)'' or ``regressor''  \cite{Gujarati2021,Dodge2003,Everitt2022}. 
		However, this book makes consquent use of the term features for the low-level properties of \gls{datapoint}s that 
		can be measured eaisily.}, first={features},text={features}  
}
\newglossaryentry{label}
{name={label},
	description={A higher level fact or quantity of interest associated with a \gls{datapoint}. If a \gls{datapoint} is an image, 
		its label might be the fact that it shows a cat (or not). Some widely used synonyms for 
		the term label are "response variable", "output variable" or "target"  \cite{Gujarati2021,Dodge2003,Everitt2022}.
 },
	first={label},text={label}  
}
\newglossaryentry{noniid}
{name={non-i.i.d.},
	description={See \gls{noniiddata}.},first={non-i.i.d.},text={non-i.i.d.}  
}
\newglossaryentry{data}
{name={data},
	description={A set of \gls{datapoint}s.},first={data},text={data}  
}
\newglossaryentry{dataset}
{name={dataset},
	description={With a slight abuse of notation we use the terms ``dataset`` or ``set of datapoints'' to 
		refer to an indexed list of \gls{datapoint}s $\datapoint^{(1)},\ldots,$. Thus, there is a first \gls{datapoint} 
		$\datapoint^{(1)}$, a second \gls{datapoint} $\datapoint^{(2)}$ and so on. Strictly speaking a dataset is a list 
		and not a set \cite{HalmosSet}. By using indexed lists of \gls{datapoint}s we avoid some of the challenges 
		arising in concept of an abstract set.},first={dataset},text={dataset}  
}
\newglossaryentry{predictor}
{name={predictor},
	description={A predictor is a \gls{hypothesis} whose function values are numeric, such as real numbers. 
		Given a \gls{datapoint} with features $\featurevec$, the predictor value $h(\featurevec) \in \mathbb{R}$ 
		is used as a prediction (estimate/guess/approximation) for the true numeric label $\truelabel \in \mathbb{R}$ of the \gls{datapoint}. },first={predictor},text={predictor}  
}
\newglossaryentry{labeled datapoint}
{name={labeled datapoint},
 description={A \gls{datapoint} whose label is known or has been determined by some means (might require human experts).},
 first={labeled datapoint},text={labeled datapoint}  
}
\newglossaryentry{rv}
{name={random variable},
 description={Formally, a random variable is a map from a set of elementary events to a set of values. The set of 
 	elementary events is equipped with a probability measure. A real-valued random variable maps elementary events 
 	to real numbers $\mathbb{R}$. A discrete random variable maps elementary events to a finite set such as $\{-1,1\}$ or $\{ \mbox{cat}, \mbox{no cat} \}$. A vector-valued random variable maps elementary events to the \gls{euclidspace} $\mathbb{R}^{\featuredim}$ 
 	with some dimension $\featuredim \in \mathbb{N}$.}, first={random variable (RV)},text={RV}  
}
\newglossaryentry{trainset}
{
	name={training set},
	description={A set of data points that is used in \gls{erm} to train a hypothesis $\hat{h}$. The average 
		loss of $\hat{h}$ on the training set is referred to as the training error. The comparison between training and 
		validation error informs adaptations of the ML method (such as using a different \gls{hypospace}).},first={training set},text={training set}  
}
\newglossaryentry{trainerr}
{
	name={training error},
	description={Consider a ML method that aims at learning a hypothesis $h \in \hypospace$ out of a \gls{hypospace}. 
		We refer to the average loss or \gls{emprisk} of a hypothesis $h \in \hypospace$ on a dataset as training error if it 
		is used to choose between different hypotheses. The principle of \gls{erm} is find the hypothesis $h^{*} \in \hypospace$ 
		with smallest training error. Overloading the notation a bit, we might refer by training error also to the minimum \gls{emprisk} 
		achieved by the optimal hypothesis $h^{*} \in \hypospace$.},first={training error},text={training error}  
}
\newglossaryentry{datapoint}
{
	name={data point},
	description={A \gls{datapoint} is any object that conveys information \cite{coverthomas}. Data points might be 
		students, radio signals, trees, forests, images, \gls{rv}s, real numbers or proteins. We characterize data points 
		using two types of properties. One type of property is referred to as a feature. \Gls{features} are properties of a 
		\gls{datapoint} that can be measured or computed in an automated fashion. Another type of property is referred to as a 
		\gls{label}. The label of a \gls{datapoint} represents a higher-level facts or quantities of interest. In contrast to \gls{features}, 
		determining the label of a \gls{datapoint} typically requires human experts (domain experts). Roughly speaking, ML aims 
		at predicting the label of a \gls{datapoint} based solely on its features.},first={data point},text={data point}  
}
\newglossaryentry{valerr}
{name={validation error},
 description={Consider a hypothesis $\hat{h}$ which is obtained by \gls{erm} on a \gls{trainset}. The 
 average loss of $\hat{h}$ on a \gls{valset}, which is different from the \gls{trainset}, is referred to as 
 the validation error.},first={validation error},text={validation error}  
}
\newglossaryentry{valset}
{name={validation set},
  description={A set of \gls{datapoint}s that has not been used as \gls{trainset} in \gls{erm} to train a hypothesis $\hat{h}$. 
  The average loss of $\hat{h}$ on the validation set is referred to as the validation error and used to diagnose 
  the ML method. The comparison between training and validation error informs adaptations of the 
  ML method (such as using a different \gls{hypospace}).},first={validation set},text={validation set}  
}
\newglossaryentry{testset}
{name={test set},
	description={A set of \gls{datapoint}s that have neither been used in a \gls{trainset} to learn parameters of a model 
		nor in a \gls{valset} to choose between different models (by comparing validation errors).},first={test set},text={test set}  
}
\newglossaryentry{linclass}{name={linear classifier}, description={A classifier $h(\featurevec)$ maps the 
		feature vector $\featurevec \in \mathbb{R}^{\featuredim}$ of a datapoint to a predicted label $\hat{\truelabel} \in \labelspace$ out of 
		a finite set of label values $\labelspace$. We can characterize such a classifier equivalently by the decision regions 
		$\decreg{a}$, for every possible label value $a \in \labelspace$. Linear classifiers are such that the boundaries 
		between the regions $\decreg{a}$ are hyperplanes in $\mathbb{R}^{\featuredim}$.  },first={linear classifier},text={linear classifier} }
\newglossaryentry{erm}{name={empirical risk minimization}, description={Empirical risk minimization is the optimization problem 
		of finding the hypothesis with minimum average loss (empirical risk) on a given set of \gls{datapoint}s (the \gls{trainset}).
		 Many ML methods are special cases of empirical risk minimization.},first={empirical risk minimization (ERM)},text={ERM} }
\newglossaryentry{multilabelclass}{name={multi-label classification}, description={Multi-label classification problems and methods involve 
	\gls{datapoint}s that are characterized by several individual labels.},first={multi-label classification},text={multi-label classification} }
\newglossaryentry{ssl}{name={semi-supervised learning}, description={Semi-supervised learning methods use (large amounts of) 
		unlabeled \gls{datapoint}s to support the learning of a hypothesis from (a small number of) labeled \gls{datapoint}s \cite{SemiSupervisedBook}. },first={semi-supervised learning (SSL)},text={SSL} }
\newglossaryentry{regularization}{name={regularization}, description={Regularization techniques modify the \gls{erm} 
		principle such that the learnt hypothesis performs well also outside the \gls{trainset} which is used in \gls{erm}. 
		One specific approach to regularization is by adding a penalty or regularization term to the 
		objective function of \gls{erm} (which is the average loss on the \gls{trainset}). This regulazation term 
	    can be interpreted as an estimate for the increase in the expected loss (risk) compared to the average loss on the \gls{trainset}. },first={regularization},text={regularization} }
\newglossaryentry{rerm}{name={regularized empirical risk minimization}, description={Regularized empirical risk minimization is the problem 
		of finding the hypothesis that optimally balances the average loss (empirical risk) on a \gls{trainset} with 
		a regularization term. The regularization term penalizes a hypothesis that is not robust against (small) perturbations 
		of the \gls{datapoint}s in the \gls{trainset}.},first={regularized empirical risk minimization (RERM)},text={RERM} }
\newglossaryentry{gtv}{name={generalized total variation}, description={Generalized total variation measures the changes of 
	vector-valued node attributes of a graph.},first={generalized total varation (GTV)},text={GTV} }
\newglossaryentry{srm}{name={structural risk minimization}, description={Structural risk minimization is the problem 
		of finding the hypothesis that optimally balances the average loss (empirical risk) on a \gls{trainset} with 
		a regularization term. The regularization term penalizes a hypothesis that is not robust against (small) perturbations 
	of the \gls{datapoint}s in the \gls{trainset}.},first={structural risk minimization (SRM)},text={SRM} }
\newglossaryentry{netexpfam}{name={networked exponential families}, description={A networked collection of 
		exponential families having a separaate parameter vector for each node of the network. These parameter 
		vectors are coupled via the network structure. },first={networked exponential family (nExpFam)},text={nExpFam} }
\newglossaryentry{scatterplot}{name={scatterplot}, description={A visualization technique that depicts data points by markers in a 
	two-dimensional plane.},first={scatterplot},text={scatterplot} }
\newglossaryentry{learnrate}{name={learning rate}, description={Consider an iterative method for finding or learning 
				a good choice for a hypothesis. Such an iterative method repeats similar computational (update) steps that adjust 
				or modify the current choice for the hypothesis to obtain an improved hypothesis. A prime example for such an 
				iterative learning method is \gls{gd} and its variants (see \ref{ch_GD}). We refer by learning rate to any parameter 
				of an iterative learning method that controls the extent by which the current hypothesis might be modified or 
				improved in each iteration. A prime example for such a parameter is the step size used in \gls{gd}. Within this book 
		we use the term learning rate mostly as a synonym for the step size of (a variant of) \gls{gd}},
	first={learning rate},text={learning rate} }
\newglossaryentry{featuremap}{name={feature map}, description={A map that transforms some raw features into a 
		new feature vector. The new feature vector might be preferable over the raw features for several reasons. It might be possible 
	to use linear hypothesis with the new feature vectors. Another reason could be that the new feature vector is much shorter and 
therefore avoids overfitting or can be used for a \gls{scatterplot}},
	first={feature map},text={feature map} }
\newglossaryentry{fl}{name={federated learning (FL)}, description={Federated learning is an umbrella term for ML methods that 
		train models in a collaborative fashion using decentralized data and computation.},
	first={federated learning (FL)},text={FL} }
\newglossaryentry{iid}{name={i.i.d.}, description={ independent and identically distributed; e.g., 
		``$x,y,z$ are i.i.d. RVs'' means that the joint probability distribution $p(x,y,z)$ of the RVs 
		$x,y,z$ factors into the product $p(x)p(y)p(z)$ of the marginal probability distributions of 
		the variables $x,y,z$ which are identical.},
	first={independent and identically distributed (i.i.d.)},text={i.i.d.} }
\newglossaryentry{outlier}{name={outlier}, description={Many ML methods are motivated by the \gls{iidasspt} which interprets 
data points as \gls{iid} ealizations of \gls{rv}s with a common probability distribution. The \gls{iidasspt} is typically useful when the statistical 
properties of the data generation process are stationary (time-invariant). In some applications the data consists of a majority 
of ``regular'' \gls{datapoint}s that conform with an \gls{iidasspt} and a small number of data points that have fundamentally different 
statistical properties compared to the bulk of regular \gls{datapoint}s. We refer to a \gls{datapoint} that substantially deviates 
from the statistical properties of the majority of \gls{datapoint}s as an outlier.},
	first={outliers},text={outliers} }
\newglossaryentry{decisionregion}{name={decision region}, description={Consider a hypothesis map $h$ that can 
		only take values from a finite set $\labelspace$. We refer to the set of features $\featurevec \in \featurespace$ 
		that result in the same output $h(\featurevec)=a$ as a decision region of the hypothesis $h$. },first={decision region},text={decision region} }
\newglossaryentry{decisionboundary}{name={decision region}, description={Consider a hypothesis map $h$ that reads in 
		a feature vector $\featurevec \in \mathbb{R}^{\featuredim}$ and delivers a value from a finite set $\labelspace$. 
		The decision boundary induced by $h$ is the set of vectors $\featurevec \in \mathbb{R}^{\featuredim}$ that lie between 
		different\gls{decisionregion}s. More precisely, a vector $\featurevec$ belongs to the decision boundary if and only if 
	each neighborhood $\{ \featurevec': \| \featurevec - \featurevec' \| \leq \varepsilon \}$, for any $\varepsilon >0$, contains 
  at least two vectors with different function values.},first={decision boundary},text={decision boundary} }
\newglossaryentry{euclidspace}{name={Euclidean space}, description={The Euclidean space $\mathbb{R}^{\featuredim}$ of dimension $\featuredim$ refers to the space of all vectors $\featurevec= \big(\feature_{1},\ldots,\feature_{\featurelen}\big)$, with real-valued entries $\feature_{1},\ldots,\feature_{\featuredim} \in \mathbb{R}$, 
whose geometry is defined by the inner product $\featurevec^{T} \featurevec' = \sum_{\featureidx=1}^{\featuredim} \feature_{\featureidx} \feature'_{\featureidx}$ between any two 
vectors $\featurevec,\featurevec' \in \mathbb{R}^{\featuredim}$ \cite{RudinBookPrinciplesMatheAnalysis}.},first={Euclidean space},text={Euclidean space} }
\newglossaryentry{eerm}{name={explainable empirical risk minimization}, description={An instance of structural risk minimization that adds a regularization term 
		to the training error in \gls{erm}. The regularization term is chosen to favour hypotheses that are intrinsically explainable for a user.},first={explainable empirical risk minimization (EERM)},text={EERM} }
\newglossaryentry{kmeans}{name={$k$-means}, description={The $k$-means algorithm is a hard 
		clustering method. It aims at assigning data points to clusters such that they have minimum 
		average distance from the cluster centre.},first={$k$-means},text={$k$-means} }
\newglossaryentry{xml}{name={explainable machine learning}, description={Explainable ML methods aim at 
		complementing predictions with explanations for how the prediction has been obtained.},first={explainable ML},text={explainable ML} }
\newglossaryentry{fmi}{name={Finnish Meteorological Institute}, description={The Finnish Meteorological Institute is a 
		government agency responsible for gathering and reporting weather data in Finland.},first={Finnish Meteorological Institute (FMI)},text={FMI} }
\newglossaryentry{highdimregime}{name={high-dimensional regime}, description={A ML method 
		or problem belongs to the high-dimensional regime if the \gls{effdim} of the \gls{model} is larger 
		than the number of available (labeled) \gls{datapoint}s. For example, linear regression belongs to the 
   	    high-dimensional regime whenever the number $\featuredim$ of features used to characterize datapoints is 
       larger than the number of \gls{datapoint}s in the \gls{trainset}. Another example for the high-dimensional 
       regime are deep learning methods that use a hypothesis space generated by a \gls{ann} with much more tunable 
       weights than the number of \gls{datapoint}s in the \gls{trainset}. The recent field of high-dimensional statistics 
       uses probability theory to analyze ML methods in the high-dimensional regime \cite{Wain2019,BuhlGeerBook}.},
   first={high-dimensional regime},text={high-dimensional regime} }
\newglossaryentry{gmm}{name={Gaussian mixture model}, description={Gaussian mixture models (GMM) are a family of 
		probabilistic models for  \gls{datapoint}s. Within a GMM, the feature vector $\featurevec$ of a \gls{datapoint} is 
		interpreted as being drawn from one out of $\nrcluster$ different multivariate normal (Gaussian) distributions 
		indexed by $\clusteridx=1,\ldots,\nrcluster$. The probability that the feature vector $\featurevec$ is drawn from 
		the $\clusteridx$-th Gaussian distribution is denoted $p_{\clusteridx}$. The GMM is parametrized by the probability 
		$p_{\clusteridx}$ of $\featurevec$ being drawn from the $\clusteridx$-th Gaussian distribution as well as the 
		mean vectors ${\bm \mu}^{(\clusteridx)}$ and covariance matrices $ {\bm \Sigma}^{(\clusteridx)}$ for $\clusteridx=1,\ldots,\nrcluster$. 
	 },first={Gaussian mixture model (GMM)},text={GMM} }
\newglossaryentry{ml}{name={maximum likelihood}, description={
		Consider \gls{datapoint}s that are interpreted as \gls{iid} realizations of \gls{rv}s with a common (but unknown) 
		probability distribution. Maximum likelihood methods find a parameter vector $\weights$ for a probabilistic 
		model $\prob{\datapoint; \weights}$ such that the probability (density) of observing the actucal data is maximized. 
		Loosely speaking, we try out all possible parameter vectors $\weights$ and determine the resulting 
		probability of observing the given datapoints if they would be \gls{iid} with common probability distribution $\prob{\datapoint; \weights}$. 
		The maximum likelihood estimator is the parameter vector that results in the highest probability (density). 
	},first={maximum likelihood},text={maximum likelihood}}
\newglossaryentry{em}{name={expectation maximization}, description={Expectation maximization is a generic 
		technique for estimating the parameters of a probabilistic model (a parametrized probability distribution) $\prob{\datapoint; \weights}$ 
		from data \cite{BishopBook,hastie01statisticallearning,GraphModExpFamVarInfWainJor}. Expectation maximization 
		delivers an approximation to the \gls{ml} estimate for the model parameters $\weights$. 
  },first={expectation maximization (EM)},text={EM}}
\newglossaryentry{ppca}{name={probabilistic prinicipal component analysis}, description={Probabilistic prinicipal 
		component analysis extends basic \gls{pca} by using a probabilistic model for \gls{datapoint}s. Within this 
		probabilistic model, the task of dimensionality reduction becomes an estimation problem that can be solved 
	using \gls{em} methods.},first={probabilistic prinicipal component analysis (PPCA)},text={PPCA}}
\newglossaryentry{linreg}{name={linear regression}, description={Linear regression aims at learning a 
		linear hypothesis map to predict a numeric \gls{label} based on numeric \gls{features} of a \gls{datapoint}. The quality 
	of a linear hypothesis map is typically measured using the average squared error loss incurred on a set 
	of labeled \gls{datapoint}s (the \gls{trainset}).},first={linear regression},text={linear regression}}
\newglossaryentry{logreg}{name={logistic regression}, description={Logistic regression aims at learning a 
		linear hypothesis map to predict a binary \gls{label} based on numeric \gls{features} of a \gls{datapoint}. 
		The quality of a linear hypothesis map (classifier) is measured using its average logistic loss on 
		some labeled datapoints (the \gls{trainset}).},first={logistic regression},text={logistic regression}}
\newglossaryentry{logloss}{name={logistic loss}, description={Consider a \gls{datapoint} that is characterized by the features $\featurevec$ 
		and a binary label $\truelabel \in \{-1,1\}$. We use a hypothesis $h$ to predict the label $\truelabel$ solely from the features $\featurevec$. 
		The logistic loss incurred by a specific hypothesis $h$ is defined as \eqref{equ_log_loss}.},first={logistic loss},text={logistic loss}}
\newglossaryentry{hingeloss}{name={hinge loss}, description={Consider a \gls{datapoint} that is characterized by a 
		feature vector $\featurevec \in \mathbb{R}^{\featuredim}$ and a binary label $\truelabel \in \{-1,1\}$. The hinge loss 
		incurred by a specific hypothesis $h$ is defined as \eqref{equ_hinge_loss}. A regularized variant of the hinge loss 
		is used by the \gls{svm} to learn a \gls{linclass} with maximum margin between the two classes (see Figure \ref{fig_svm}). 
	},first={hinge loss},text={hinge loss}}
\newglossaryentry{iidasspt}{name={i.i.d.\ assumption}, description={The i.i.d.\ assumption interprets \gls{datapoint}s of a \gls{dataset} 
		as the realizations of \gls{iid} \gls{rv}s.},first={i.i.d.\ assumption},text={i.i.d.\ assumption} }
\newglossaryentry{hypospace}{name={hypothesis space}, description={Every practical ML method uses a specific hypothesis space, 
		which we typically denote by $\hypospace$. The hypothesis space of a ML method is a subset of all possible maps 
		from the feature space to label space. The choice for the hypothesis space should take into account available 
		computational infrastructure of statistical aspects. If the computational infrastructure allows for efficient matrix 
		operations and we expect a linear relation between feature values and label, a good first candidate 
		for the hypothesis space is the space of linear maps.},first={hypothesis space},text={hypothesis space} }
\newglossaryentry{model}{name={model}, description={We use the term model as a synonym for \gls{hypospace}},first={model},text={model} }
\newglossaryentry{ai}{name={artificial intelligence}, description={Artificial intelligence aims at developing systems that behave 
		rational in the sense of maximizing a long-term reward.},first={artificial intelligence (AI)},text={AI} }
\newglossaryentry{clustering}{name={clustering}, description={Clustering refers to the problem ot determining for each \gls{datapoint} 
		within a \gls{dataset} to which cluster it belongs to. A cluster can be defined and represented in various ways, e.g., using 
		representative \gls{datapoint}s (``cluster means'') or an entire probability distribution (see \gls{gmm}).},first={clustering},text={clustering} }
\newglossaryentry{softclustering}{name={soft clustering}, description={Soft clustering methods determine, for each \gls{datapoint} within a dataset, 
		a soft cluster assignment or the degree of belonging to a particular cluster.},first={soft clustering},text={soft clustering} }
\newglossaryentry{huberloss}{name={Huber loss}, description={The Huber loss is a mixture of the squared error loss 
		and the absolute value of the prediction error.},first={Huber loss},text={Huber loss} }
\newglossaryentry{svm}{name={support vector machine}, description={A binary classification method for learning a 
		linear map that maximally seperates \gls{datapoint}s the two classes in the feature space (``maximum margin''). 
		Maximizing this separation is equivalent to minimizing a regularized variant of the \gls{hingeloss} \eqref{equ_hinge_loss}.},first={support vector machine (SVM)},text={SVM} }
\newglossaryentry{eigenvalue}{name={eigenvalue}, description={We refer to a number $\lambda \in \mathbb{R}$ 
		as eigenvalue of a square matrix $\mathbf{A} \in \mathbb{R}^{\featuredim \times \featuredim}$ if there is a 
		non-zero vector $\vx \in \mathbb{R}^{\featuredim} \setminus \{ \mathbf{0} \}$ such that $\mathbf{A} \vx = \lambda \vx$. },first={eigenvalue},text={eigenvalue} }
\newglossaryentry{eigenvector}{name={eigenvector}, description={An eigenvector of a matrix $\mathbf{A}$ is a 
		non-zero vector $\vx \in \mathbb{R}^{\featuredim} \setminus \{ \mathbf{0} \}$ such that $\mathbf{A} \vx = \lambda \vx$ 
	with some \gls{eigenvalue} $\lambda$.},first={eigenvector},text={eigenvector} }
\newglossaryentry{evd}{name={eigenvalue decomposition}, description={The task of computing the eigenvalues and corresponding eigenvectors of a matrix.},first={eigenvalue decomposition (EVD)},text={EVD} }
\newglossaryentry{gdmethods}{name={gradient-based method}, description={Gradient-based methods 
		are iterative algorithms for finding the minimum (or maximum) of a differentiable objective function 
		of a parameter vector. These algorithms construct a sequence of approximations to an optimal parameter 
		vector whose function value is minimal (or maximal). As their name indicates, gradient-based methods 
		use the gradients of the objective function evaluated during previous iterations to construct a new (hopefully) 
		improved approximation of an optimal parameter vector.},first={gradient-based methods},text={gradient-based methods} }
\newglossaryentry{sgd}{name={subgradient descent}, description={Subgradient descent is a generalization of \gls{gd} that is obtained 
	by using sub-gradients (instead of gradients) to construct local approximations of an objective function (such as the training error as a 
	function of the weights of a hypothesis).},first={subgradient descent},text={subgradient descent} }
\newglossaryentry{stochGD}{name={stochastic gradient descent}, description={Stochastic gradient descent is obtained from \gls{gd} by 
	replacing the gradient of the objective function by a noisy (or stochastic) estimate.},first={stochastic gradient descent (SGD)},text={SGD} }
\newglossaryentry{pca}{name={principal component analysis}, description={Principal component analysis 
		determines a given number of new features that are obtained by a linear transformation (map) of the 
		raw features. },first={principal component analysis (PCA)},text={PCA} }
\newglossaryentry{loss}{name={loss}, description={With a slight abuse of language, we use the term loss either for
		\gls{lossfunc} itself or for its value for a specific pair of \gls{datapoint} and \gls{hypothesis}.},first={loss},text={loss} }
\newglossaryentry{lossfunc}{name={loss function}, description={A loss function is a map 
		$$\lossfun: \featurespace \times \labelspace \times \hypospace \rightarrow \mathbb{R}_{+}: \big( \big(\featurevec,\truelabel\big), h\big) \mapsto  \loss{(\featurevec,\truelabel)}{h}$$ which assigns a pair consisting of a datapoint, with features $\featurevec$ 
		and label $\truelabel$, and a hypothesis $h \in \hypospace$ the non-negative real number $\loss{(\featurevec,\truelabel)}{h}$. 
		The loss value $\loss{(\featurevec,\truelabel)}{h}$ quantifies the discrepancy between the true label $\truelabel$ 
		and the predicted label $h(\featurevec)$. Smaller (closer to zero) values $\loss{(\featurevec,\truelabel)}{h}$ mean 
		a smaller discrepancy between predicted label and true label of a data point. Figure \ref{fig_loss_function} depicts 
		a loss function for a given data point, with features $\featurevec$ and label $\truelabel$, as a function of the 
		hypothesis $h \in \hypospace$.  },first={loss function},text={loss function} }
\newglossaryentry{decisiontree}{name={decision tree}, description={A decision tree is a flow-chart like representation of 
		a hypothesis map $h$. More formally, a decision tree is a directed graph which reads in the feature vector $\featurevec$ 
	  of a \gls{datapoint} at its root node. The root node then forwards the \gls{datapoint} to one of its children nodes based on some 
	  elementary test on the features $\featurevec$. If the receiving children node is not a leaf node, i.e., it has itself children nodes, 
	  it represents another test. Based on the test result, the \gls{datapoint} is further pushed to one of its neighbours. This testing and 
	  forwarding of the \gls{datapoint} is repeated until the \gls{datapoint} ends up in a leaf node (having no children nodes). 
	  The leaf nodes represent sets (decision regions) constituted by feature vectors $\featurevec$ that are mapped to 
	  the same function value $h(\featurevec)$.},first={decision tree},text={decision tree} }
\newglossaryentry{noniiddata}{name={non i.i.d.\ data}, description={A dataset that cannot be well modelled as 
		realizations of \gls{iid} \gls{rv}s.},first={non-i.i.d.\ data},text={non-i.i.d.\ data} }
\newglossaryentry{API} 
{
	name={Application Programming Interface (API)},
	description={An Application Programming Interface (API) is a particular set of rules and specifications that a software program can follow to access and make use of the services and resources provided by another particular software program that implements that API},
	first={Application Programming Interface (API)},
	text={API}
}
\newglossaryentry{hilbertspace}{name={Hilbert space},description={A Hilbert space is a linear vector space that is equipped with an inner product between pairs of vectors. One important example for a Hilbert spaces is the Euclidean spaces $\mathbb{R}^{\featuredim}$, for some dimension $\featuredim$, which consists of Euclidean vectors $\vu = \big(u_{1},\ldots,u_{\featurelen}\big)^{T}$ along with the inner product $\vu^{T} \vv$.},first={Hilbert space},text={Hilbert space}}
\newglossaryentry{sample}{name={sample},description={A finite sequence (list) of \gls{datapoint}s $\vz^{(1)},\ldots,\vz^{(\sampleidx)}$ that 
		is obtained or interpreted as the realizations of $\samplesize$ \gls{iid} \gls{rv}s with the common probability distribution $p(\vz)$.
		The length $\samplesize$ of the sequence is referred to as the \gls{samplesize}.},first={sample},text={sample}}
\newglossaryentry{samplesize}
{name=sample size,
	description={The number of individual \gls{datapoint}s contained in a dataset that is 
	obtained from realizations of \gls{iid} \gls{rv}s.},first={sample size},text={sample size}
}
\newglossaryentry{ann}
{name=artificial neural network,
	description={An artificial neural network is a graphical (signal-flow) representation of a map from 
		features of a \gls{datapoint} at its input to a predicted label at its output.},first={artificial neural network (ANN)},text={ANN}
}
\newglossaryentry{randomforest}
{name=random forest,
	description={A random forest is a set (ensemble) of different \gls{decisiontree}s. Each of these \gls{decisiontree}s is 
		obtained by fitting a perturbed copy of the original dataset.},first = {random forest}, text={random forest}
}
\newglossaryentry{bagging}{name={bagging},description={bagging (or ``bootstrap aggregation'') is a generic 
		technique to improve or robustify a given ML method. The idea is to use the bootstrap to generate 
		perturbed copy of a given training set and then apply the original ML method to learn a separate 
		hypothesis for each perturbed copy of the training set. The resulting set of hypotheses is then 
		used to predict the label of a \gls{datapoint} by combining or aggregating the individual predictions of each 
		individual hypothesis. For hypotheses that deliver numeric label values (regression methods) this 
		aggregation could be implemented by computing the average of individual predictions.},first={bootstrap aggreation (bagging)},text={bagging}}
\newglossaryentry{gd}{name={gradient descent},description={Gradient descent is an iterative method for finding the minimum 
		of a differentiable function $f(\weights)$. },first={gradient descent (GD)},text={GD}}
\newglossaryentry{ladregression}{name={least absolute deviation regression},description={Least absolute deviation regression uses 
		the average of the absolute precondition errors to find a linear hypothesis.},first={least absolute deviation regression},
	   text={least absolute deviation regression}}
\newglossaryentry{metric}{name={metric},description={A metric refers to a loss function that is used solely 
	    for the final performance evaluation of a learnt hypothesis. The metric is typically a \gls{lossfunc} that 
	    has a ``natural'' interpretation (such as the $0/1$ loss \eqref{equ_def_0_1}) but is not a good choice to guide 
	    the learning process, e.g., via \gls{erm}. For \gls{erm}, we typically prefer \gls{lossfunc}s that depend smoothly 
	    on the (parameters of the) hypothesis. Examples for such smooth loss functions include the squared error 
	    loss \eqref{equ_squared_loss} and the \gls{logloss} \eqref{equ_log_loss}.},first={metric},text={metric}}
\newglossaryentry{bayesrisk}{name={Bayes risk},description={We use the term Bayes risk as a synonym for the \gls{risk} or expected \gls{loss} 
		of a hypothesis. Some authors reserve the term Bayes risk for the \gls{risk} of a hypothesis that achieves minimum \gls{risk}, such a hypothesis 
		being referred to as a \gls{bayesestimator} \cite{LC}.},first={Bayes risk},text={Bayes risk}}
\newglossaryentry{bayesestimator}{name={Bayes estimator},description={A hypothesis whose \gls{bayesrisk} is minimal \cite{LC}.},first={Bayes estimator},text={Bayes estimator}}
\newglossaryentry{weights}{name={weights},
	description={We use the term weights synonymously for a finite set of parameters within a \gls{model}. 
		For example, the linear model consists of all linear maps $h(\featurevec)= \weights^{T} \featurevec$ 
		that read in a feature vector $\featurevec=\big(\feature_{1},\ldots,\feature_{\featurelen}\big)^{T}$ of a \gls{datapoint}. 
		Each specific linear map is characterized by specific choices for the parameters for weights $\weights = \big( \weight_{1},\ldots,\weight_{\featuredim}\big)^{T}$.},first={weights},text={weights}}
\newglossaryentry{parameters}{name={parameters},
	description={The parameters of a ML \gls{model} are tunable (learnable or adjustable) quantities that 
		allow to choose between different hypothesis maps. For example, the linear model $\hypospace \defeq \{h: h(\feature)= \weight_{1} \feature + \weight_{2}\}$ consists  of all hypothesis maps $h(\feature)= \weight_{1} \feature + \weight_{2}$ with a particular 
		choice for the parameters $\weight_{1},\weight_{2}$. Another example of parameters are the weights 
		assigned to the connections of an \gls{ann}.},first={parameters},text={parameters}}
\newglossaryentry{lln}{name={law of large numbers},
	description={The law of large numbers refers to the convergence of the average of an increasing number of 
		\gls{iid} \gls{rv}s to the mean (or expectation) of their common probability distribtuion.},first={law of large numbers},text={law of large numbers}}
\newglossaryentry{nonsmooth}{name={non-smooth},
	description={We refer to a function as non-smooth if it is not \gls{smooth} \cite{nesterov04}.},first={non-smooth},text={non-smooth}}
\newglossaryentry{convex}{name={convex},
	description={A set $\mathcal{C}$ in $\mathbb{R}^{\featuredim}$ is called convex if it contains the line segment 
		between any two points of that set. A function is called convex if its epigraph is a convex set \cite{BoydConvexBook}.},first={convex},text={convex}}
\newglossaryentry{smooth}{name={smooth},
	description={We refer to a real-valued function as smooth if it is differentiable and its gradient is continuous \cite{nesterov04,CvxBubeck2015}. In particular, a differentiable function $f(\weights)$ is $\beta$-smooth if the gradient $\nabla f(\weights)$ is Lipschitz continuous with Lipschitz constant $\beta$, i.e., $\| \nabla f(\weights) - \nabla f(\weights') \| \leq \beta \| \weights - \weights' \|$. },first={smooth},text={smooth}}
\newglossaryentry{dataug}{name={data augmentation},
	description={Data augmentation methods add synthetic \gls{datapoint}s to an existing set of \gls{datapoint}s. 
		These synthetic \gls{datapoint}s might be obtained by perturbations (adding noise) or 
		transformations (rotations of images) of the original \gls{datapoint}s.},first={data augmentation},text={data augmentation}}
\newcommand\defeq{:=}
\newcommand{\vx}[0]{{\bf x}}
\newcommand{\vv}[0]{{\bf v}}
\newcommand{\vu}[0]{{\bf u}}
\newcommand{\mA}[0]{{\bf A}}
\newcommand{\mL}[0]{{\bf L}}
\newcommand{\vw}[0]{{\bf w}}
\newcommand{\mQ}{\mathbf{Q}}
\newcommand{\va}[0]{{\bf a}}
\newcommand{\vz}[0]{{\bf z}}
\newcommand{\prob}[1]{p({#1})} 
\def \expect {\mathbb{E} }
\newcommand{\biasterm}{B}
\newcommand{\bmx}[0]{\begin{bmatrix}}
\newcommand{\emx}[0]{\end{bmatrix}}
\newcommand{\featuredim}{n}
\newcommand{\featurelen}{\featuredim}
\newcommand{\samplesize}{m}
\newcommand{\sampleidx}{i} 
\newcommand{\datapoint}{\vz} 
\newcommand{\clusteridx}{c} 
\newcommand{\nrcluster}{k} 
\newcommand{\nrseeds}{s} 
\newcommand{\featureidx}{j}
\newcommand\truelabel{y}
\newcommand\featurevec{\vx}
\newcommand\feature{x}
\newcommand\dataset{\mathcal{D}}
\newcommand\effdim[1]{d_{\rm eff} \left( #1 \right)}
\newcommand{\hypospace}{\mathcal{H}}
\newcommand{\featurespace}{\mathcal{X}}
\newcommand{\labelspace}{\mathcal{Y}}
\newcommand{\eigval}[1]{\lambda_{#1}}
\newcommand{\regparam}{\lambda}
\newcommand{\edges}{\mathcal{E}}
\newcommand{\loss}[2]{L\big({#1},{#2}\big)}
\DeclareMathOperator*{\argmin}{argmin}
\newcommand{\lossfun}{L}
\newcommand{\datacluster}[1]{\mathcal{C}^{(#1)}}
\newcommand{\weight}{w}
\newcommand{\weights}{\vw}
\newcommand{\decreg}[1]{\mathcal{R}_{#1}}
\newcommand{\nriter}{R}
\newcommand{\edgesigvec}{\mathbf{f}} 
\newcommand{\edgesig}{f} 
\newcommand{\edgeweight}{A}
\newcommand{\edgeweights}{\mA}
\newcommand{\graph}{\mathcal{G}}
\newcommand{\nodes}{\mathcal{V}}
\newcommand{\degreemtx}{\mathbf{D}}
\newcommand{\nodedegree}[1]{d^{(#1)}}
\newcommand{\nodeidx}{i}
\newcommand{\nrnodes}{n}
\newcommand{\edge}[2]{\{#1,#2\}}
\newcommand{\directededge}[2]{\left(#1,#2\right)}
\newcommand{\pditer}{r}
\newtheorem{theorem}{Theorem}
\title{Flow-Based Clustering and Spectral Clustering: A Comparison} 
\name{Y. SarcheshmehPour, Y. Tian, L. Zhang, A. Jung}
\address{\normalsize Department of Computer Science, Aalto University, Finland; firstname.lastname(at)aalto.fi\\[-0.5mm] }
\begin{document}
	\maketitle
\begin{abstract}
We propose and study a novel graph clustering method for data with an intrinsic network structure.   
Similar to spectral clustering, we exploit an intrinsic network structure of data to 
construct Euclidean feature vectors. These feature vectors can then be fed into basic 
clustering methods such as k-means or Gaussian mixture model (GMM) based soft clustering. 
What sets our approach apart from spectral clustering is that we do not use the eigenvectors 
of a graph Laplacian to construct the feature vectors. Instead, we use the solutions of total 
variation minimization problems to construct feature vectors that reflect connectivity between 
data points. Our motivation is that the solutions of total variation minimization are 
piece-wise constant around a given set of seed nodes. These seed nodes can be obtained 
from domain knowledge or by simple heuristics that are based on the network structure of data. 
Our results indicate that our clustering methods can cope with certain graph structures 
that are challenging for spectral clustering methods.
\end{abstract}

\begin{keywords}
	machine learning; clustering; non-smooth optimization; community detection; complex networks
\end{keywords}

\section{Introduction}

The analysis of networked data is often facilitated by grouping or clustering the data points 
into coherent subsets of data points. Clustering methods aim at finding subsets (clusters) of 
data points that are more similar to each other than to the remaining data points \cite{MLBasics}. 
Many basic clustering algorithms aim at clusters that are enclosed by a hypersphere or hyperellipsoid 
in a Euclidean feature space. These methods are most successful if data points are characterized by 
Euclidean feature vectors whose distance is small (large) for data points in the same (different) cluster(s). 

Graph clustering methods can be applied to data with an intrinsic network structure that reflects 
a notion of similarity between different data points. Many important application domains, ranging 
from the Internet of Things to the management of pandemics, generate distributed collections of 
local datasets (``big data over network'') \cite{BigDataNetworksBook,JuLiveProject2021}. The 
network structure of these local datasets might be  induced by spatio-temporal proximity (``contact networks''), 
statistical dependencies,  or functional relations \cite{graphic2009,hierarchical2015}. 

\section{Problem Formulation} 

We represent networked data using an undirected “empirical” graph $\graph=(\nodes, \edges, \edgeweights)$ \cite{SemiSupervisedBook,Luxburg2007}. 
Every node $\nodeidx \in \nodes = \{1,\ldots,\nrnodes\}$ of the empirical graph represents a datapoint. The dataset 
consists of $\nrnodes$ different datapoints. A datapoint might be a single sensor measurement, an entire time 
series,  or even a whole collection of videos. 

Our approach does not take the individual nature of datapoints into account. Rather, we only 
use their similarities as encoded in the weighted edges of $\graph$. In particular, two nodes $\nodeidx, \nodeidx'$ 
are connected by an edge $\edge{\nodeidx}{\nodeidx'}$ if the corresponding datapoints are similar. 
The amount of similarity is quantified by a positive edge weight $\edgeweight_{\nodeidx,\nodeidx'}$. 

We find it convenient to use an oriented (or directed) version of the empirical graph $\graph$ by 
declaring, for each undirected edge $\edge{\nodeidx}{\nodeidx'}$, the node $\min\{\nodeidx,\nodeidx'\}$ 
as tail and the other node $\max\{\nodeidx,\nodeidx'\}$ as head in the corresponding directed edge. The resulting 
oriented empirical graph $\overrightarrow{\graph} \defeq \big( \nodes, \overrightarrow{\edges}, \edgeweights \big)$ 
has the same nodes as $\graph$ and contains the directed edge $\directededge{\nodeidx}{\nodeidx'}$ from node $\nodeidx$ to node $\nodeidx'$ 
if and only if $\nodeidx < \nodeidx'$ and $\edge{\nodeidx}{\nodeidx'} \in \edges$. This directed edge $\directededge{\nodeidx}{\nodeidx'} \in \overrightarrow{\edges}$ has the same 
weight $\edgeweight_{\nodeidx,\nodeidx'}$ as the corresponding undirected edge $\edge{\nodeidx}{\nodeidx'} \in \edges$. 

We will use several matrices that are naturally associated with an empirical graph $\graph$. 
The weight matrix $\edgeweights$ contains the edge weight $\edgeweight_{\nodeidx,\nodeidx'}$ 
in its $\nodeidx$th row and $\nodeidx'$th column. The diagonal degree matrix $\degreemtx \defeq {\rm diag}\{\nodedegree{1},\ldots,\nodedegree{\nrnodes}\}$ collects the (weighted) node degrees 
$\nodedegree{\nodeidx}=\sum_{\nodeidx'} \edgeweight_{\nodeidx,\nodeidx'}$ for each node $\nodeidx$. 
The graph Laplacian matrix $\mathbf{L} \defeq \degreemtx - \edgeweights$ is instrumental 
for spectral clustering methods (see Section \ref{sec_spec_clustering}). 

There are also vector spaces naturally associated with an empirical graph. The vector space consisting of 
maps $\vu$ from nodes to real numbers is denoted $\mathbb{R}^{\nrnodes}$. A vector $\vu \in \mathbb{R}^{\nrnodes}$ 
is a function or map that assigns each node $\nodeidx \in \nodes$ a real-number $u_{\nodeidx}$. 
Similarly, we define the vector space $\mathbb{R}^{\overrightarrow{\edges}}$ that consists of vectors 
$\edgesigvec$ that assign each edge $\directededge{\nodeidx}{\nodeidx'} \in \edges$ a real-number $\edgesig_{\directededge{\nodeidx}{\nodeidx'}}$. 


%
%

Many graph clustering methods, such as spectral clustering, rely on a clustering assumption. 
This clustering assumption applies to a dataset for which we know a useful empirical graph 
$\graph$ 
 \cite{ Ng2001,  Spielma2012, Luxburg2007}. A cluster is then 
constituted by data points whose nodes in $\graph$ are more densely connected with each other 
than with nodes representing data points outside the cluster \cite{complex2013,complex2015}. 
This informal clustering assumption can be made precise by constraining the cut-size of cluster \cite{Luxburg2007}. 
By the maxflow/mincut duality, requiring a small cut is equivalent to requiring a minimum amount of network 
flow that can be routed from the nodes inside a cluster through its boundary \cite{Boykov2004}. 

Using the above clustering assumption, we construct feature vectors $\featurevec^{(\nodeidx)}$ for 
each node (or datapoint) $\nodeidx \in \nodes$. The feature vectors will be constructed such that 
two nodes $\nodeidx, \nodeidx'$ that belong to a well-connected subset of nodes (a cluster) have more 
feature vectors $\featurevec^{(\nodeidx)}, \featurevec^{(\nodeidx')}$ with a small Euclidean distance 
$\big\| \featurevec^{(\nodeidx)}- \featurevec^{(\nodeidx')} \big\|$. Loosely speaking, the feature construction 
maps similarity between datapoints in the empirical graph $\graph$ into proximity
of their Euclidean feature vectors. This allows us then to apply standard clustering methods, such as k-means 
of soft clustering \cite{MLBasics}, to the network structured data. 


\section{Spectral Clustering} 
\label{sec_spec_clustering} 

Before we detail our construction of the feature vectors $\featurevec^{(\nodeidx)}$ let us 
briefly review the construction used by spectral graph clustering methods \cite{Luxburg2007} . 
The most basic variant of spectral clustering constructs the node feature vectors $\featurevec^{(\nodeidx)}$ using 
the eigenvectors of the graph Laplacian matrix $\mathbf{L} = \degreemtx - \edgeweights$ \cite{Luxburg2007}. 
The matrix $\mathbf{L}$ is positive semi-definite (psd) and therefore we can find an orthonormal set of 
eigenvectors \cite{golub96}
 \begin{equation}
 \label{equ_def_eigvals}
\vu^{(1)},\ldots,\vu^{(\nrnodes)} \mbox{ with eigenvalues } 0 \leq \eigval{1} \leq \ldots \leq \eigval{\nrnodes}.
\end{equation} 
For a given a number $\nrcluster$ of clusters, spectral clustering methods use the feature vectors 
\begin{equation} 
\label{equ_def_feat_vec_spec_cluster}
\featurevec^{(\nodeidx)} = \big( u^{(1)}_{\nodeidx}, \ldots,u^{(\nrcluster)}_{\nodeidx} \big)^{T} \mbox{ for every node } \nodeidx \in \nodes. 
\end{equation} 

{\bf The Ideal Case.} To develop some intuition for the usefulness of the construction \eqref{equ_def_feat_vec_spec_cluster}, 
consider an empirical graph that contains $\nrcluster$ components $\datacluster{1},\ldots,\datacluster{\nrcluster} \subseteq \nodes$ 
that are not connected by any edge. In this case, the first $\nrcluster$ (possibly repeated) eigenvalues in \eqref{equ_def_eigvals} 
are equal to $0$. The correspond eigenvectors $\vu^{(1)},\ldots,\vu^{(\nrcluster)}$ constitute an orthonormal basis 
for the subspace of $\mathbb{R}^{\nrnodes}$ that is spanned by the indicator vectors $\mathbf{e}^{(\clusteridx)}$ 
of the components $\datacluster{\clusteridx}$, $e_{\nodeidx}^{(\clusteridx)} = 1$ for nodes $\nodeidx \in \datacluster{\clusteridx}$ and $e_{\nodeidx}^{(\datacluster{\clusteridx})} = 0$ for nodes $\nodeidx \notin \datacluster{\clusteridx}$. Thus, the feature vectors of nodes in the 
same cluster $\datacluster{\clusteridx}$ are identical and orthogonal to the feature vectors of nodes in different 
clusters $\datacluster{\clusteridx'}$ with $\clusteridx' \neq \clusteridx$. 

{\bf General Case.} In general, the empirical graph will contain edges between different clusters. 
A widely used approach to analyzing spectral clustering methods is to consider the effect of these 
inter-cluster edges as perturbations of the Laplacian matrix $\mL$. For sufficiently small perturbations, 
the subspace spanned by the first $\nrcluster$ eigenvectors of $\mL$ will be close to the subspace spanned 
by the indicator vectors $\mathbf{e}^{(\clusteridx)}$. This deviation can be made precise using tools 
from linear algebra and results in conditions on the empirical graph for the success of k-means 
when applied to the feature vectors \eqref{equ_def_feat_vec_spec_cluster} \cite{Luxburg2007,Ng2001}.  

There are certain types of empirical graphs that are challenging for spectral clustering methods that use 
the features \eqref{equ_def_feat_vec_spec_cluster} \cite{Luxburg2007,NIPS2006_bdb6920a,NSZ09}. 
For example, spectral methods tend to fail for datasets that consist of clusters with significantly 
varying sizes (see Section \ref{unsuper_experiment}). Another example of a challenging network structure for spectral 
clustering is a chain graph \cite{NNSPFrontiers2018}. 

Consider a chain graph that consists of two clusters connected via a single edge. The 
weights of all intra-cluster edges are equal to $1$ while the weight $\edgeweight_{o}$ of the single boundary 
edge is slightly smaller. As the chain graph is connected,  $\eigval{1}=0$ with corresponding eigenvector $\vu^{(1)}$ having identical entries. The smallest non-zero eigenvalue is $\eigval{2}$ has a corresponding 
eigenvector $\vu^{(2)}$ whose entries are depicted in Figure \ref{fig_solution_nLasso_chain}.
The resulting feature vectors (see \eqref{equ_def_feat_vec_spec_cluster} with $k=2$) do not reflect well the cluster structure of the chain graph.

\begin{figure}[htbp]
	\begin{center}
	\resizebox{7cm}{2.5cm}{
		\begin{tikzpicture}
			\tikzset{x=0.35cm,y=2.5cm,every path/.style={>=latex},node style/.style={circle,draw}}
			\csvreader[ head to column names,%
			late after head=\xdef\iold{\i}\xdef\xold{\x},,%
			after line=\xdef\iold{\i}\xdef\xold{\x}]%
			{nLassoChain.csv}{}
			{\draw [line width=0.0mm] (\iold, \xold) (\i,\x) node {\large $\circ$};
			}
			\csvreader[ head to column names,%
			late after head=\xdef\iold{\i}\xdef\xold{\x},,%
			after line=\xdef\iold{\i}\xdef\xold{\x}]%
			{FiedlerChain.csv}{}
			{\draw [line width=0.0mm] (\iold, \xold) (\i,\x) node {\large $\star$};
			}
			\draw[->] (0.4,0) -- (22,0);
			\node [right] at (19.5,0.1) {\centering node $i$};
			\draw[->] (0.5,0) -- (0.5,1.1);	
			\draw[-,dashed] (0,0.5) -- (22,0.5);
			\foreach \label/\labelval in {0/$0$,0.5/$0.5$,1/$1$}
			{ 
				\draw (0.4,\label) -- (0.6,\label) node[left=2mm] {\large \labelval};
			}
			\foreach \label/\labelval in {1/$1$,5/$5$,10/$10$,15/$15$,20/$20$}
			{ 
				\draw (\label,1pt) -- (\label,-2pt) node[below] {\large \labelval};
			}
		\end{tikzpicture}}
		\vspace*{-4mm}
	\end{center}
	\caption{Solution of TV minimization (``$\circ$'') for the chain graph 
		obtained from Algorithm \ref{alg1} using $\nriter=1000$ iterations.  Entries (``$\star$'') 
		of the eigenvector $\vu^{(2)}$ corresponding to the smallest non-zero eigenvalue $\eigval{2}$ (see \eqref{equ_def_eigvals}). 
	}
\label{fig_solution_nLasso_chain}
\vspace*{-3mm}
\end{figure}
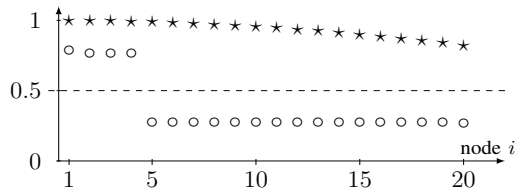

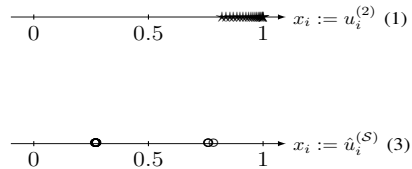
\begin{figure}[htbp]
	\begin{center}
		\resizebox{5.5cm}{2.3cm}{
		\begin{tikzpicture}
			\tikzset{x=4cm,y=2.5cm,every path/.style={>=latex},node style/.style={circle,draw}}
			\csvreader[ head to column names,%
			late after head=\xdef\iold{\i}\xdef\xold{\x},,%
			after line=\xdef\iold{\i}\xdef\xold{\x}]%
			{nLassoChain.csv}{}
			{\draw [line width=0.0mm] (\xold,0) (\x,0) node {\large $\circ$};
			}
			\csvreader[ head to column names,%
			late after head=\xdef\iold{\i}\xdef\xold{\x},,%
			after line=\xdef\iold{\i}\xdef\xold{\x}]%
			{FiedlerChain.csv}{}
			{\draw [line width=0.0mm] (\xold,1 ) (\x,1) node {\large $\star$};
			}
		    
		    \draw[->] (-0.1,0) -- (1.1,0);  
		    
		      \draw[->] (-0.1,1) -- (1.1,1);  
			
			\node [right] at (1.1,0.0) {$\feature_{\nodeidx} \defeq \hat{u}_{\nodeidx}^{(\seednodes)}$ \eqref{equ_def_nLasso}};
				\node [right] at (1.1,1.0) {$\feature_{\nodeidx} \defeq u_{\nodeidx}^{(2)}$ \eqref{equ_def_eigvals}};
			
			\foreach \label/\labelval in {0/$0$,0.5/$0.5$,1/$1$}
			{ 
				\draw (\label,1.02) -- (\label,0.98) node[below] {\large \labelval};
			}

			\foreach \label/\labelval in {0/$0$,0.5/$0.5$,1/$1$}
			{ 
				\draw (\label,0.02) -- (\label,-0.02) node[below] {\large \labelval};
			}
		\end{tikzpicture}}
		\vspace*{-4mm}
	\end{center}
	\caption{Scatterplot of (scalar) node features $\feature_{\nodeidx}$ constructed by spectral and flow-based clustering 
		for the chain graph in Figure \ref{fig_solution_nLasso_chain}. Spectral clustering uses the entries (``$\star$'') 
		of the eigenvector $\vu^{(2)}$ (see \eqref{equ_def_eigvals}). Flow-based clustering uses the solution of TV 
		minimization (``$\circ$'') (see \eqref{equ_def_nLasso}).
	}
\label{fig_scatterplot}
\vspace*{-3mm}
\end{figure}

We next describe our novel construction of feature vectors. The idea is to replace the eigenvectors 
of the Laplacian in \eqref{equ_def_feat_vec_spec_cluster} with solutions of total variation (TV) minimization problems. 
It turns out that these feature vectors better reflect the cluster geometry of $\graph$ in terms of 
bottlenecks for network flows between different clusters. These flow bottlenecks are the boundaries 
between the clusters that are obtained by applying clustering methods to these feature vectors.  





\section{Total Variation Minimization}
\label{sec_local_graph_clustering}

The construction of feature vectors \eqref{equ_def_feat_vec_spec_cluster} in spectral clustering 
methods is global. Indeed, the eigenvectors \eqref{equ_def_eigvals} of the graph Laplacian will 
typically depend on every edge in the empirical graph. Instead, we construct features in a more 
local fashion by considering local clusters around seed nodes. These seed nodes might be obtained 
by domain knowledge (telling us which nodes must belong to the same cluster), based on basic connectivity 
properties (nodes with high degree),  or chosen randomly. 

Given a set of seed nodes $\seednodes$ we solve the TV minimization problem 
\begin{equation}
\label{equ_def_nLasso}
\hat{\vu}^{(\seednodes)} = \argmin_{\vu \in \mathbb{R}^{\nodes}}  \sum_{\nodeidx \in \seednodes} (1\!-\!u_{\nodeidx})^2/2\!+\! \sum_{\nodeidx \notin \seednodes} (\alpha/2) u_{\nodeidx} ^2\!+\!\regparam \| \vu \|_{\rm TV}. 
\end{equation}
Here, we used the TV of a vector $\vu \in \mathbb{R}^{\nrnodes}$,
\begin{equation} 
\label{eq_def_TV}
\| \vu \|_{\rm TV} = \sum_{\edge{\nodeidx}{\nodeidx'} \in \edges} \edgeweight_{\nodeidx,\nodeidx'} |u_{\nodeidx} - u_{\nodeidx'}|.
\end{equation}
Note that the TV minimization \eqref{equ_def_nLasso} is parametrized by the set $\seednodes \subseteq \nodes$ of 
seed nodes. Our clustering methods solve multiple instances of \eqref{equ_def_nLasso}, each time using another choice for the seed nodes $\seednodes$. 
Section \ref{sec_flow_based_graph_clustering} discusses approaches for choosing a useful set of seed nodes. 

We have recently explored the duality between TV minimization \eqref{equ_def_nLasso} and network flow optimization \cite{JungYasmin2021}. 
This duality allows to characterize the solution $\hat{\vu}$ of \eqref{equ_def_nLasso} in terms of network flows. 
A network flow is a vector $\edgesigvec \in \mathbb{R}^{\overrightarrow{\edges}}$ whose entries $\edgesig_{\directededge{\nodeidx}{\nodeidx'}}$ 
represent a flow from node $\nodeidx$ to $\nodeidx'$. 
\begin{theorem}
\label{th_flow_characterization_nLasso}
A vector $\widehat{\vu}$ solves TV minimization \eqref{equ_def_nLasso} if and only if 
there is a flow vector $\edgesigvec \in \mathbb{R}^{\overrightarrow{\edges}}$ such that 
\begin{align} 
	-\sum_{\directededge{\nodeidx}{\nodeidx'} \in \overrightarrow{\edges}} \edgesig_{\directededge{\nodeidx}{\nodeidx'}} + \sum_{\directededge{\nodeidx'}{\nodeidx} \in \overrightarrow{\edges}}\edgesig_{\directededge{\nodeidx'}{\nodeidx}} & =  \hat{u}_{\nodeidx}\!-\!1 \mbox{ for } \nodeidx  \in \seednodes, \nonumber \\[2mm]
	-\sum_{\directededge{\nodeidx}{\nodeidx'} \in \overrightarrow{\edges}} \edgesig_{\directededge{\nodeidx}{\nodeidx'}}  
	+\sum_{\directededge{\nodeidx'}{\nodeidx} \in \overrightarrow{\edges}}\edgesig_{\directededge{\nodeidx'}{\nodeidx}}& =  \alpha \hat{u}_{\nodeidx}  \mbox{ for } \nodeidx  \notin \seednodes,  \nonumber \\[2mm]
	|\edgesig_{\directededge{\nodeidx}{\nodeidx'}}|  \leq \lambda \edgeweight_{\nodeidx,\nodeidx'}  \mbox{ for all } &\directededge{\nodeidx}{\nodeidx'} \in \overrightarrow{\edges}, \nonumber \\[2mm] 
     \hat{u}_{\nodeidx}\!-\!\hat{u}_{\nodeidx'}\!=\!0 \mbox{ for all } \directededge{\nodeidx}{\nodeidx'} \in \overrightarrow{\edges} \mbox{ with } & |\edgesig_{\directededge{\nodeidx}{\nodeidx'}}|\!<\!\lambda\edgeweight_{\nodeidx,\nodeidx'} . \label{equ_pd_optimal_non_staturated} 
\end{align} 
\end{theorem}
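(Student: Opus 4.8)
The plan is to recognize the objective in \eqref{equ_def_nLasso} as a convex function and apply the first-order optimality condition (Fermat's rule) $\mathbf{0} \in \partial g(\widehat{\vu})$, where $g$ denotes the objective. Since $g$ is the sum of a differentiable convex function and the convex but non-smooth penalty $\regparam \| \cdot \|_{\rm TV}$, I would split $g = h + \regparam \| \cdot \|_{\rm TV}$ with smooth part $h(\vu) = \sum_{\nodeidx \in \seednodes} (1-u_\nodeidx)^2/2 + \sum_{\nodeidx \notin \seednodes}(\alpha/2) u_\nodeidx^2$. Its gradient has components $(\nabla h(\vu))_\nodeidx = u_\nodeidx - 1$ for $\nodeidx \in \seednodes$ and $(\nabla h(\vu))_\nodeidx = \alpha u_\nodeidx$ for $\nodeidx \notin \seednodes$. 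Because $h$ is differentiable, the Moreau--Rockafellar sum rule gives $\partial g(\vu) = \nabla h(\vu) + \regparam\, \partial \| \cdot \|_{\rm TV}(\vu)$, so optimality of $\widehat{\vu}$ is equivalent to $-\nabla h(\widehat{\vu}) \in \regparam\, \partial \| \cdot \|_{\rm TV}(\widehat{\vu})$.

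The key step I would carry out next is to compute the subdifferential of the total variation seminorm \eqref{eq_def_TV}. Writing $\| \vu \|_{\rm TV} = \sum_{\directededge{\nodeidx}{\nodeidx'} \in \overrightarrow{\edges}} \edgeweight_{\nodeidx,\nodeidx'} |u_\nodeidx - u_{\nodeidx'}|$ as a sum of absolute values of the linear forms $u_\nodeidx - u_{\nodeidx'}$, the chain rule together with the subdifferential of $|\cdot|$ shows that every element of $\partial \| \cdot \|_{\rm TV}(\widehat{\vu})$ has the form $\sum_{\directededge{\nodeidx}{\nodeidx'} \in \overrightarrow{\edges}} \edgeweight_{\nodeidx,\nodeidx'}\, s_{\directededge{\nodeidx}{\nodeidx'}} (\mathbf{e}_\nodeidx - \mathbf{e}_{\nodeidx'})$, where $\mathbf{e}_\nodeidx$ is the $\nodeidx$-th standard basis vector and each scalar $s_{\directededge{\nodeidx}{\nodeidx'}}$ lies in $\partial|\cdot|(\hat{u}_\nodeidx - \hat{u}_{\nodeidx'})$; that is, $s_{\directededge{\nodeidx}{\nodeidx'}} = \mathrm{sign}(\hat{u}_\nodeidx - \hat{u}_{\nodeidx'})$ when $\hat{u}_\nodeidx \neq \hat{u}_{\nodeidx'}$ and $s_{\directededge{\nodeidx}{\nodeidx'}} \in [-1,1]$ otherwise.

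The decisive identification is to set $\edgesig_{\directededge{\nodeidx}{\nodeidx'}} \defeq \regparam\, \edgeweight_{\nodeidx,\nodeidx'}\, s_{\directededge{\nodeidx}{\nodeidx'}}$ for each oriented edge. Reading off the $\nodeidx$-th component of the subgradient and using the orientation (tail $\nodeidx$, head $\nodeidx'$), the membership $-\nabla h(\widehat{\vu}) \in \regparam\,\partial\|\cdot\|_{\rm TV}(\widehat{\vu})$ becomes exactly the two flow-conservation equations of the theorem: the net inflow of $\edgesigvec$ at node $\nodeidx$ equals $\hat{u}_\nodeidx - 1$ for seeds and $\alpha \hat{u}_\nodeidx$ otherwise. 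The constraint $s_{\directededge{\nodeidx}{\nodeidx'}} \in [-1,1]$ translates verbatim into the capacity bound $|\edgesig_{\directededge{\nodeidx}{\nodeidx'}}| \leq \regparam\,\edgeweight_{\nodeidx,\nodeidx'}$, and the fact that $|s_{\directededge{\nodeidx}{\nodeidx'}}| < 1$ forces $\hat{u}_\nodeidx - \hat{u}_{\nodeidx'} = 0$ yields the complementary-slackness condition \eqref{equ_pd_optimal_non_staturated}. Since Fermat's rule and the subgradient characterization are themselves equivalences, both directions of the ``if and only if'' follow at once: a flow satisfying the four conditions is precisely a choice of subgradient coefficients certifying $\mathbf{0} \in \partial g(\widehat{\vu})$.

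The step I expect to require the most care is the computation of $\partial \| \cdot \|_{\rm TV}$ and the bookkeeping of signs and edge orientations in the identification $\edgesig = \regparam \edgeweight s$; in particular one must verify that the tail/head convention of $\overrightarrow{\edges}$ makes the per-node component of the subgradient coincide with the stated net-flow (inflow minus outflow) expression. A minor preliminary point worth settling is the existence of a minimizer, which is immediate since $h$ is coercive whenever $\alpha > 0$ or $\seednodes = \nodes$, although the equivalence itself needs only convexity. Alternatively, the same conditions can be obtained by Lagrangian duality between TV minimization and network-flow optimization as in \cite{JungYasmin2021}, with $\edgesigvec$ playing the role of the dual (flow) variable; I would nevertheless keep the subdifferential route as the primary argument since it is self-contained.
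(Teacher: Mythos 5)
Your approach is genuinely different from the paper's: the paper gives no self-contained argument at all, but simply invokes Fenchel duality between \eqref{equ_def_nLasso} and a dual minimum-cost flow problem, pointing to \cite[Ch. 31]{RockafellarBook} and \cite[Sec. 3]{JungYasmin2021}. Your Fermat's-rule derivation is more elementary and self-contained, and its ``only if'' half is sound: optimality of $\widehat{\vu}$ is equivalent to $-\nabla h(\widehat{\vu}) \in \regparam\,\partial \|\cdot\|_{\rm TV}(\widehat{\vu})$, every such subgradient has the edge-wise form you compute, and the identification $\edgesig_{\directededge{\nodeidx}{\nodeidx'}} \defeq \regparam \edgeweight_{\nodeidx,\nodeidx'} s_{\directededge{\nodeidx}{\nodeidx'}}$ then produces a flow obeying all four conditions of the theorem.

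The gap is in the converse, which you assert ``follows at once.'' It does not, because the four printed conditions are strictly weaker than the subgradient certificate your argument needs. Given a flow satisfying them, set $s_{e} \defeq \edgesig_{e}/(\regparam \edgeweight_{\nodeidx,\nodeidx'})$ for each edge $e = \directededge{\nodeidx}{\nodeidx'}$; the capacity bound and \eqref{equ_pd_optimal_non_staturated} only give $|s_{e}| \leq 1$ together with the implication that $|s_{e}|<1$ forces $\hat{u}_{\nodeidx} = \hat{u}_{\nodeidx'}$. On a saturated edge with $\hat{u}_{\nodeidx} \neq \hat{u}_{\nodeidx'}$ this leaves both signs $s_{e} = \pm 1$ admissible, whereas membership $s_{e} \in \partial |\cdot|(\hat{u}_{\nodeidx} - \hat{u}_{\nodeidx'})$ demands the specific sign $s_{e} = \mathrm{sign}(\hat{u}_{\nodeidx} - \hat{u}_{\nodeidx'})$. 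This is not a removable bookkeeping issue; the stated conditions genuinely admit non-optimal points. Counterexample: two nodes joined by one edge of weight $\edgeweight_{1,2}=1$, $\seednodes = \{1\}$, and take $\hat{u}_{1} = 1+\regparam$, $\hat{u}_{2} = -\regparam/\alpha$ with flow $\edgesig_{\directededge{1}{2}} = -\regparam$. Conservation holds at both nodes (indeed $-\edgesig_{\directededge{1}{2}} = \regparam = \hat{u}_{1}-1$ and $\edgesig_{\directededge{1}{2}} = -\regparam = \alpha \hat{u}_{2}$), the capacity bound holds with equality so \eqref{equ_pd_optimal_non_staturated} is vacuous, yet this point is not the minimizer: since $\hat{u}_{1} - \hat{u}_{2} > 0$ the objective is differentiable there with nonzero gradient $(2\regparam, -2\regparam)$. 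What your argument actually proves---and what the Fenchel-duality route of \cite{JungYasmin2021} also delivers---is the equivalence in which the last two conditions are replaced by the sign-aware requirement $\edgesig_{\directededge{\nodeidx}{\nodeidx'}} = \regparam \edgeweight_{\nodeidx,\nodeidx'}\, \mathrm{sign}(\hat{u}_{\nodeidx} - \hat{u}_{\nodeidx'})$ whenever $\hat{u}_{\nodeidx} \neq \hat{u}_{\nodeidx'}$, with $|\edgesig_{\directededge{\nodeidx}{\nodeidx'}}| \leq \regparam \edgeweight_{\nodeidx,\nodeidx'}$ otherwise. You should prove that corrected statement, and note explicitly that the theorem's conditions as literally written support only the ``only if'' direction.
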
 
\begin{proof}
The result can be obtained by applying Fenchel duality \cite[Ch. 31]{RockafellarBook} to TV minimization \eqref{equ_def_nLasso} 
and a dual minimum cost flow problem (see \cite[Sec. 3]{JungYasmin2021}). 
\end{proof} 

Let us illustrate Theorem \ref{th_flow_characterization_nLasso} for the empirical graph in Figure \ref{fig_solution_nLasso_chain}. 
This empirical graph is a chain graph and partitioned into two clusters $\datacluster{1}$ and $\datacluster{2}$ which 
are connected by a boundary edge $b$ with weight $\edgeweight_{o}$. 

{\bf Ideal Case.} Assume we solve TV minimization \eqref{equ_def_nLasso} for the chain graph in Figure \ref{fig_solution_nLasso_chain} 
with a single seed node $\seednodes = \{ \nodeidx^{(1)} \}$ with $\nodeidx^{(1)} \in \datacluster{1}$. Moreover, we choose 
$\regparam$ and $\alpha$ in \eqref{equ_def_nLasso} such that 
\begin{equation} 
\label{equ_cond_alpha_lambda}
\regparam \edgeweight_{o} < 1\mbox{, and } | \datacluster{1}| (\alpha/\regparam)+ \edgeweight_{o}<1, | \datacluster{2}| \alpha \geq \regparam \edgeweight_{o}. 
\end{equation} 
A direct inspection of the optimality condition \eqref{equ_pd_optimal_non_staturated} then yields the TV minimization solution   
\begin{equation} 
\label{equ_def_TV_min_chain}
\hat{u}_{\nodeidx} = \begin{cases} (1- \regparam\edgeweight_{o} )/ \big(1+ \alpha (|\datacluster{1}|-1) \big) & \mbox{ for } \nodeidx \in \datacluster{1} \\ 
	\regparam\edgeweight_{o} /  \big( \alpha |\datacluster{2}| \big) & \mbox{ for } \nodeidx \in \datacluster{2} \end{cases}.
\end{equation} 
Note that the vector \eqref{equ_def_TV_min_chain} is piece-wise constant over the clusters $\datacluster{1}$ and $\datacluster{2}$. 
Thus, if we would use \eqref{equ_def_TV_min_chain} as (single) feature $\feature_{\nodeidx} = \hat{u}_{\nodeidx}$, basic clustering 
methods would successfully recover the clusters $\datacluster{1}$ and $\datacluster{2}$. 

{\bf General Case.} Note that \eqref{equ_def_TV_min_chain} characterizes the solution of TV minimization \eqref{equ_def_nLasso} 
only when $\regparam$ and $\alpha$ are chosen such that \eqref{equ_cond_alpha_lambda} is valid. However,  condition \eqref{equ_cond_alpha_lambda} 
is not useful in practice as it involves the size of the clusters which we would like to determine. A practical approach to choosing $\alpha$ and $\lambda$ 
can be based on probabilistic models for the empirical graph such as stochastic block models \cite{JuPLSBMAsiloma2020}. Alternatively, the 
choice for $\alpha$ and $\lambda$ can be guided by the size of the piece $\{ \nodeidx \in \nodes: \hat{u}_{\nodeidx} = \hat{u}_{\nodeidx^{(1)}}\}$ 
around the seed node $\nodeidx^{(1)}$. This piece grows for increasing $\regparam$ and becomes the entire node set $\nodes$ whenever 
$\regparam \edgeweight_{\nodeidx,\nodeidx'} > 1 \mbox{ for all edges } \edge{\nodeidx}{\nodeidx'} \in \edges.$

The duality between TV minimization \eqref{equ_def_nLasso} and network flow optimization 
is also instrumental for developing iterative methods to solve \eqref{equ_def_nLasso}. 
Algorithm \ref{alg1} summarizes the application of a generic primal-dual method to solve \eqref{equ_def_nLasso} \cite{JungYasmin2021}. 

\begin{algorithm}[htbp]
	\caption{Primal-Dual Method For TV Minimization \eqref{equ_def_nLasso}}
	\label{alg1}
	{\bf Input}: $\graph=\big(\nodes,\edges,\edgeweights\big), \seednodes, \lambda, \alpha,  \nriter$\\
	{\bf Initialize}: $\hat{\vu}^{(0)}\!\defeq\!{\bf 0}$; $\hat{\vu}^{(-1)}\!\defeq\!{\bf 0}$;$\hat{\edgesigvec}^{(0)} \!\defeq\! {\bf 0}$; \\
	{\bf Output}: approximation $\widehat{\vu}$ of solution $\widehat{\vu}^{(\seednodes)}$ to \eqref{equ_def_nLasso} \\
	\begin{algorithmic}[1]
		\For{$\pditer = 0, \ldots,\nriter-1$}
		
		\State  \hspace*{-1mm}$\mbox{ {\bf for} } i\!\in\!\nodes \mbox{ {\bf do} } : \tilde{u}_{\nodeidx}  \!\defeq\! 2 \hat{u}^{(\pditer)}_{\nodeidx} - \hat{u}^{(\pditer\!-\!1)}_{\nodeidx} $

		\For{$ e = \directededge{\nodeidx}{\nodeidx'} \in \overrightarrow{\edges} $}
		\State $	\hat{\edgesig}^{(\pditer\!+\!1)}_{e} \!\defeq\!\hat{\edgesig}^{(\pditer)}_{e}\!+\! (1/2)  (\tilde{u}_{\nodeidx}\!-\!\tilde{u}_{\nodeidx'})$
		\State $\hat{\edgesig}^{(\pditer\!+\!1)}_{e} \!\defeq\! \hat{\edgesig}_{e}^{(\pditer\!+\!1)}\!/\!\max\{1, |\hat{\edgesig}_{e}^{(\pditer\!+\!1)}|/(\regparam \edgeweight_{e}) \} $ \label{equ_step_cap_constraint}
		\EndFor

		\For{$ \nodeidx \in \nodes $}
		\State $\hat{u}^{(\pditer\!+\!1)}_{\nodeidx}  \!\defeq\! \hat{u}^{(\pditer)}_{\nodeidx}\!-\!\gamma_{\nodeidx} \bigg[\hspace*{-1mm}\sum_{\directededge{\nodeidx}{\nodeidx'}} \hat{\edgesig}^{(\pditer\!+\!1)}_{\directededge{\nodeidx}{\nodeidx'}}  \!-\!\hspace*{-1mm}\sum_{\directededge{\nodeidx'}{\nodeidx}} \hat{\edgesig}^{(\pditer\!+\!1)}_{\directededge{\nodeidx'}{\nodeidx}} \bigg]$ \label{equ_compensate_signal_value}
		
		\If {$\nodeidx \in \seednodes$}
		\State $\hat{u}_{\nodeidx}^{(\pditer\!+\!1)} \!\defeq\!  \big(\gamma_{\nodeidx}\!+\!\hat{u}^{(\pditer\!+\!1)}_{\nodeidx}\big)/(\gamma_{\nodeidx}\!+\!1)$ \label{equ_step_inject_flow_seed_nodes}
		\Else
		\State $\hat{u}_{\nodeidx}^{(\pditer\!+\!1)} \!\defeq\! \hat{u}^{(\pditer\!+\!1)}_{\nodeidx}/(\alpha\gamma_{\nodeidx}\!+\!1)$ \label{equ_step_leakage_flow_non_seed}
		\EndIf
		
		\EndFor
		
		\EndFor
		\State $\widehat{\vu} \defeq \hat{\vu}^{(\nriter)}$
	\end{algorithmic}
\end{algorithm}
The output $\widehat{\vu} \in \mathbb{R}^{\nrnodes}$ of Algorithm \ref{alg1} is an approximation to 
the solution of \eqref{equ_def_nLasso}. We assume that the number of iterations $\nriter$ used for 
Algorithm \ref{alg1} is sufficiently large such that the output of Algorithm \ref{alg1} can be considered 
a (numeric) solution to \eqref{equ_def_nLasso}. The number $\nriter$ of iterations can be guided by 
probabilistic models for the underlying empirical graph combined with the convergence rates guaranteed 
by primal-dual methods \cite{PrecPockChambolle2011}. Alternatively, we can tune the number of 
iterations based on the final clustering result obtained by using Algorithm \ref{alg1} as a sub-routine 
within our clustering method (see Algorithm \ref{unsuperalg}). 

It is instructive to interpret Algorithm \ref{alg1} as a message passing method for iteratively optimizing the network flow 
$\hat{\edgesig}^{(\pditer)}_{\directededge{\nodeidx}{\nodeidx'}}$. Step \ref{equ_step_cap_constraint} enforces 
the capacity constraint $| \hat{\edgesig}^{(\pditer)}_{\directededge{\nodeidx}{\nodeidx'}} | \leq \regparam \edgeweight_{\nodeidx,\nodeidx'}$. 
Step \ref{equ_compensate_signal_value} adjusts the value $\hat{u}^{(\pditer)}_{\nodeidx}$ based on the 
net flow into the node $\nodeidx$. In step \ref{equ_step_inject_flow_seed_nodes}, flow is injected into 
seed nodes $\nodeidx \in \seednodes$ while in step \eqref{equ_step_leakage_flow_non_seed} flow is leaked out of remaining nodes $\nodeidx \notin \seednodes$.

\section{Flow-Based Graph Clustering}
\label{sec_flow_based_graph_clustering} 

We are now in the position to formulate our flow-based graph clustering method as Algorithm \ref{unsuperalg}. 
This method constructs feature vectors $\featurevec$ using the solutions of TV minimization \eqref{equ_def_nLasso} 
for different choices of seed nodes $\seednodes$. Instead of using the values of eigenvectors 
for the graph Laplacian (as used by spectral clustering), we use entries of the vector solving TV minimization \eqref{equ_def_nLasso} 
to construct feature vectors for each node $\nodeidx \in \nodes$. 

\begin{algorithm}[htbp]
	\caption{Flow-Based Graph Clustering}
	\label{unsuperalg}
	{\bf Input}: empirical graph $\graph$, TV min. parameters $\regparam$, $\alpha$, 
	number $\nrcluster$ of cluster; number $\nrseeds$ of seeds.\\
	{\bf Output}: cluster assignments $\hat{c}_{1},  \ldots , \hat{c}_{\nrnodes} \in \{1,\ldots,\nrcluster\}$ \\
	\begin{algorithmic}[1]
		\For{$\pditer = 1, \ldots,\nrseeds$}
		\State select new seed nodes $\seednodes$  \label{equ_step_select_seed_nodes} with Algorithm \ref{selseednodes}
		\State run Algorithm \ref{alg1} with $\graph,\seednodes,\regparam,\alpha$
		\State store resulting vector in $\widehat{\vu}^{(\pditer)}$
		\EndFor
	\State construct node features 
\begin{equation} 
	\label{equ_def_flow_vec_features}
	\featurevec^{(\nodeidx)} = \big( \hat{u}^{(1)}_{\nodeidx}, \ldots,\hat{u}^{(\nrseeds)}_{\nodeidx} \big)^{T} \mbox{ for every node } \nodeidx \in \nodes. 
\end{equation} 
 \State compute cluster assignments $\hat{c}_{\nodeidx}$ by applying k-means 
 to feature vectors \eqref{equ_def_flow_vec_features} 
	\end{algorithmic}
\end{algorithm}
A key challenge for the successful application of Algorithm \ref{unsuperalg} is a suitable 
section of seed nodes in step \ref{equ_step_select_seed_nodes}. One simple 
approach is to choose the set $\seednodes$ by randomly selecting a single 
node $\nodeidx \in \nodes$. In general it is preferable to use more than one seed node. 
However,  we must ensure that a particular selection of seed nodes $\seednodes$ 
contains only nodes from the same cluster. 

Our numerical experiments (see Section \ref{sec_num_experiment}) use a simple  
heuristic method for selecting seed nodes $\seednodes$ in step \eqref{equ_step_select_seed_nodes} of Algorithm \ref{unsuperalg}.
This heuristic method is summarized in Algorithm \ref{selseednodes}. Algorithm \ref{selseednodes} constructs 
a new set of seed nodes by first randomly choosing a node $\nodeidx'$ with large degree $\nodedegree{\nodeidx}$ and 
then adding all neighbours of $\nodeidx'$ that have a sufficiently large number of common neighbours with $\nodeidx'$.   

\begin{algorithm}[htbp]
	\caption{Select Seed Nodes $\seednodes$}
	\label{selseednodes}
	{\bf Input}: empirical graph $\graph$, minimum number of common neighbours $\eta$, minimum degree $d$\\
	{\bf Output}: seed nodes $\seednodes \subseteq \nodes$ \\
	\begin{algorithmic}[1]
	   \State initialize $\seednodes \defeq \emptyset$ 
		\State determine  $\nodeidx' \defeq randomly\ select\ from\ \{\nodeidx : \nodedegree{\nodeidx} \geq d\}$
		\State add node, $\seednodes \defeq \{ \nodeidx' \}$ 
		\For{each $\nodeidx''$ with $\edge{\nodeidx'}{\nodeidx''} \in \edges$}
		\State $\mathcal{D} = \{ \nodeidx''': \edge{\nodeidx'''}{\nodeidx'}, \edge{\nodeidx''}{\nodeidx'''} \in \edges \}$
		\If{$| \mathcal{D} | \geq \eta$}
		\State $\seednodes \defeq \seednodes \bigcup \{ \nodeidx'' \}$
		\EndIf
		\EndFor
	\end{algorithmic}
\end{algorithm}

\section{Numerical Experiments}
\label{sec_num_experiment}

For datasets without intrinsic graphs, we construct the empirical graph by using an Euclidean distance based equation to define the similarity between data points:  $A_{i, j}:=\exp \left(-\left\|\mathbf{x}^{(i)}-\mathbf{x}^{(j)}\right\|_{2}^{2} /\left(2 \sigma^{2}\right)\right)$.
Here, $\sigma$ is a tuning parameter that chosen via cross-validation or using a probabilistic model for the data points to keep numerical stability.

\subsection{Experiments with Synthetic data}
\label{unsuper_experiment}


A dataset \cite{NIPS2006_bdb6920a} with two clusters, as depicted in Figure \ref{fig_gu}, the first cluster is a set of 2D data points drawn from Gaussian density centered at $(2,0.2)$ with diagonal covariance matrix $ 0.01*I $, the second cluster is a set of 2D data points denoting uniform density in a rectangular region: $\left\{ ( x_{1}, x_{2})  \mid 0 < x_{1}< 8, {-0.05} < x_{2} < 0 \right\}$. As shown in Figure \ref{fig_gu}, our method significantly outperforms spectral clustering. Parameters used for Algorithm \ref{unsuperalg}  are $\alpha = 0.005$, $\lambda=0.01$,  $\eta=50$,  and $d=12$.\\


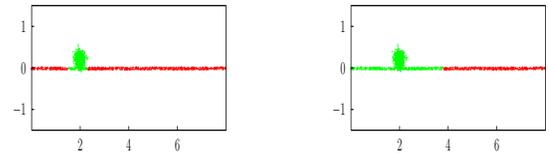
\begin{figure}[htbp]
	\begin{center}

         \resizebox{3cm}{2.0cm}{
           \begin{tikzpicture}
                \csvreader [ head to column names,
                      late after head=\xdef\xold{\x}\xdef\yold{\y}\xdef\cold{\c},
                      after line=\xdef\xold{\x}\xdef\yold{\y}\xdef\cold{\c}]
                      {ourResult.csv}{}{
                \ifthenelse{\c = 1}{\draw [color=red] (\x, \y) node {\large $\cdot$};}{}
               \ifthenelse{\c = 0}{\draw [color=green] (\x, \y) node {\large $\cdot$};}{}
    }
               \draw(0,-1.5) -- (8,-1.5) -- (8,1.5) --(0,1.5) -- (0,-1.5);
                \foreach \label/\labelval in {-1/$-1$,0/$0$,1/$1$}
              {\draw (0,\label) -- (0.1,\label) node[left=2mm] {\large \labelval};
                  }
                \foreach \label/\labelval in {2/$2$,4/$4$,6/$6$}
              {\draw (\label,-1.5) -- (\label,-1.4) node[below=2mm] {\large \labelval};
             }
         \end{tikzpicture} }
\hspace{1cm}
\resizebox{3cm}{2.0cm}{
           \begin{tikzpicture}
                \csvreader [ head to column names,
                      late after head=\xdef\xold{\x}\xdef\yold{\y}\xdef\cold{\c},
                      after line=\xdef\xold{\x}\xdef\yold{\y}\xdef\cold{\c}]
                      {spectralResult.csv}{}{
                \ifthenelse{\c = 1}{\draw [color=red] (\x, \y) node {\large $\cdot$};}{}
               \ifthenelse{\c = 0}{\draw [color=green] (\x, \y) node {\large $\cdot$};}{}
    }
               \draw(0,-1.5) -- (8,-1.5) -- (8,1.5) --(0,1.5) -- (0,-1.5);
                \foreach \label/\labelval in {-1/$-1$,0/$0$,1/$1$}
              {\draw (0,\label) -- (0.1,\label) node[left=2mm] {\large \labelval};
                  }
                \foreach \label/\labelval in {2/$2$,4/$4$,6/$6$}
              {\draw (\label,-1.5) -- (\label,-1.4) node[below=2mm] {\large \labelval};
             }
         \end{tikzpicture} }
	
	\vspace*{-4mm}
	\end{center}
	\caption{ The left plot is the clustering result of Algorithm \ref{unsuperalg}, the right one is the clustering result of spectral clustering \cite{Ng2001}.
	}
\label{fig_gu}
\vspace*{-3mm}
\end{figure}

\subsection{Image Segmentation / Pixel Clustering}
\label{segmentation}

The performance of Algorithm \ref{unsuperalg} for image segmentation/pixel clustering is tested on 
some RGB images. We construct an empirical graph as mentioned in \ref{sec_num_experiment} based on pixel values.  Each pixel is connected to other pixels within up to three hops.  
empirical graph is forced to be sparse by removing some edges associated with small weights. 
Pixels in a rectangular are set to be seed nodes. After 1000 iterations, a local cluster which 
segment out the object of interest from the background is determined. As depicted in fig \ref{fig:bear}, 
the algorithm can accurately detect which pixels belong to the object. To compare and contrast, 
we also use spectral clustering to execute this task, the result in fig \ref{fig:bear} shows that 
our algorithm out-performs spectral clustering.\\
The source code for the above experiments can be found at \url{https://github.com/YuTian8328/}. 

\begin{figure}
    \centering
 \includegraphics[width=2.7cm, height=1.8cm]{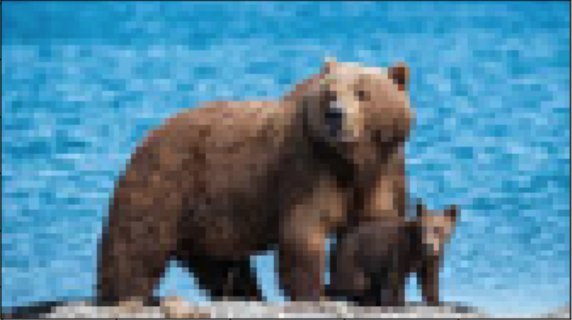}
 \includegraphics[width=2.7cm, height=1.8cm]{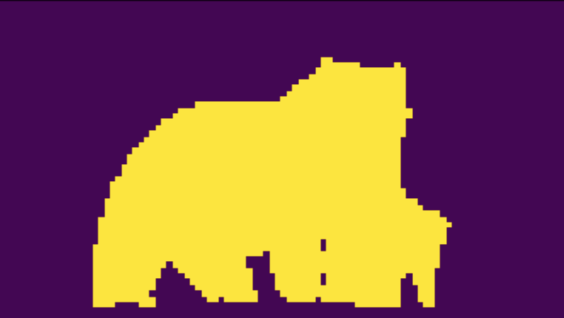}
  \includegraphics[width=2.7cm, height=1.8cm]{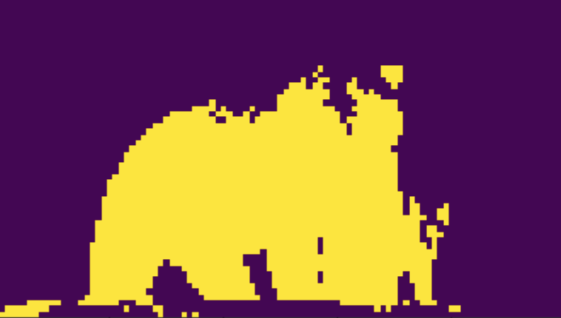}
    \caption{Image Segmentation: The left plot is the original image, the middle one is the segmentation 
    	result from Algorithm \ref{unsuperalg}, the right one is the result from spectral clustering.}
\label{fig:bear}
\end{figure}


\bibliographystyle{IEEEbib}
\bibliography{Asilomar2021_nLassoClustering.bbl}

\begin{thebibliography}{1}



\bibitem{graphic2009}
D.~Koller and N.~Friedman.
\newblock ``Probabilistic Graphical Models: Principles and Techniques. Adaptive computation and machine learning,``
\newblock in {\em MIT Press}, 2009.



\bibitem{hierarchical2015}
S.~Banerjee, B.P.~Carlin, and A.E.~Gelfand.
\newblock ``Hierarchical Modeling and Analysis for Spatial Data,``
\newblock in {\em Chapman and Hall/CRC}, 2015.



\bibitem{social2015}
H.-J.~Li and J.J.~Daniels.
\newblock ``Social significance of community structure: Statistical view,``
\newblock in {\em IEEE Phys. Rev}, 91(1):012801, Jan. 2015.



\bibitem{PrecPockChambolle2011}
T. Pock and A. Chambolle.
\newblock ``Diagonal preconditioning for first order primal-dual algorithms in convex optimization,``
\newblock in {\em IEEE ICCV}, 21, Nov. 2011.



\bibitem{JungDualitynLasso}
A.~Jung.
\newblock ``On the duality between network flows and network lasso,``
\newblock {\em IEEE Sig. Proc. Lett.}, 27:940 -- 944, 2020.



\bibitem{Ng2001}
A.~Y. Ng, M.~I. Jordan, and Y.~Weiss,
\newblock ``On spectral clustering: Analysis and an algorithm,``
\newblock in {\em Adv. Neur. Inf. Proc. Syst}, 2001.



\bibitem{Spielma2012}
D.~Spielman,
\newblock ``Spectral graph theory,``
\newblock in {\em U. Naumann and O. Schenk,
editors, Combinatorial Scientific Computing. Chapman and Hall/CRC}, 2012.



\bibitem{RockafellarBook}
R. T. Rockafellar,
\newblock ``Convex Analysis,``
\newblock in {\em Princeton Univ. Press}, 1970.



\bibitem{Luxburg2007}
U.~von Luxburg,
\newblock ``A tutorial on spectral clustering,``
\newblock {\em Statistics and Computing}, vol. 17, no. 4, pp. 395--416, Dec.
  2007.


\bibitem{JungYasmin2021}
A.~Jung and Y.~SarcheshmehPour,
\newblock ``Local Graph Clustering With Network Lasso,``
\newblock IEEE Signal Processing Letters ( Volume: 28), 2020, pp. 106--110.


\bibitem{golub96}
G. H. Golub and C. F. {Van Loan},
\newblock ``Matrix Computations, 3rd.``
\newblock Johns Hopkins University Press, 1996.



\bibitem{NetworkLasso}
D.~Hallac, J.~Leskovec, and S.~Boyd,
\newblock ``Network lasso: Clustering and optimization in large graphs,``
\newblock in {\em Proc. SIGKDD}, 2015, pp. 387--396.

\bibitem{BoydConvexBook}
S.~Boyd and L.~Vandenberghe,
\newblock ``Convex Optimization,``
\newblock Cambridge Univ. Press, Cambridge, UK, 2004.


\bibitem{JungTVMin2019}
A.~Jung, A~O. Hero, A.~Mara, S.~Jahromi, A.~Heimowitz, and Y.C. Eldar.
\newblock ``Semi-supervised learning in network-structured data via total
  variation minimization,``
\newblock {\em IEEE Trans. Signal Processing}, 67(24), Dec. 2019.



\bibitem{complex2013}
A.~BertrandMarc and M.~Moonen.
\newblock ``Seeing the bigger picture: How nodes can learn their place within a complex ad hoc network topology,``
\newblock {\em IEEE Signal Processing Magazine}, 30(3):71-82, May 2013.



\bibitem{survey2011}
M.C.V.~Nascimento and A.C.~De Carvalho.
\newblock ``Spectral methods for graph clustering–a survey,``
\newblock {\em European Journal of Operational Research}, 211(2):221–231, 2011.


\bibitem{JungNguyen2019}
A.~Jung and N.~Tran.
\newblock ``Localized linear regression in networked data,``
\newblock {\em IEEE Signal Processing Letters}, vol.26, no. 7, pp. 1090–1094, 2019.


\bibitem{complex2015}
B.~Saha, A.~Mandal, S.B.~Tripathy , D.~Mukherjee1
\newblock ``Complex Networks, Communities and Clustering: A survey,``
\newblock {\em CoRR}, vol. abs/1503.06277, 2015.



\bibitem{He2014}
B.~He, Y.~You, and X.~Yuan.
\newblock ``On the convergence of primal-dual hybrid gradient algorithm,``
\newblock {\em SIAM J. Imaging Sci.}, 7(4):2526--2537, 2014.



\bibitem{SBM_paper}
C.~Lee and D.J. Wilkinson .
\newblock ``A review of stochastic block models and extensions for graph clustering,``
\newblock {\em Applied Network Science.}, volume 4, Article number: 122, 2019



\bibitem{JuPLSBMAsiloma2020}
A.~Jung.
\newblock ``Clustering  in  partially  labeled  stochasticblock models via total variation minimization,``
\newblock {\em inProc.54th Asilomar Conf.},  Signals, Systems, Computers, Pa-cific Grove, CA, Nov. 2020.


\bibitem{BigDataNetworksBook}
S. Cui and A. Hero and Z.-Q. Luo and J.M.F. Moura.
\newblock ``Big Data over Networks,``
\newblock {\em cup.},  Cambridge, UK. 2016.


\bibitem{JuLiveProject2021}
A. Jung.
\newblock ``Federated Learning over Networks for Pandemics,``
\newblock {\em Manning.},  2021.



\bibitem{SemiSupervisedBook}
A . Chapelle and B. Sch{\"o}lkopf and A. Zien.
\newblock ``Semi-Supervised Learning,``
\newblock {\em The MIT Press.}, Cambridge, Massachusetts. 2006.



\bibitem{Boykov2004}
Y. Boykov and V. Kolmogorov.
\newblock ``An experimental comparison of min-cut/max-flow algorithms for energy minimization in vision,``
\newblock {\em IEEE Trans. Pattern Anal. Mach. Intell.}, vol. 26, no. 9, 2004.



\bibitem{NNSPFrontiers2018}
A. Jung and M. Hulsebos.
\newblock ``The Network Nullspace Property for Compressed Sensing of Big Data over Networks,``
\newblock {\em Front. Appl. Math. Stat.}, 2018.


\bibitem{NSZ09}
B. Nadler and N. Srebro and X. Zhou.
\newblock ``Statistical Analysis of Semi-Supervised Learning: The Limit of Infinite Unlabelled Data,``
\newblock {\em Advances in Neural Information Processing Systems 22.}, pages. 1330--1338, 2009.


\bibitem{NIPS2006_bdb6920a}
Nadler, Boaz and Galun, Meirav.
\newblock ``Fundamental Limitations of Spectral Clustering,``
\newblock {\em Advances in Neural Information Processing Systems.}, B. Sch\"{o}lkopf and J. Platt and T. Hoffman, 2007, vol. 19, MIT Press.


\bibitem{MLBasics}
A.~Jung.
\newblock ``Machine Learning: The Basics,``
\newblock {\em ArXiv e-prints https://arxiv.org/abs/1805.05052.}, 2020.




\end{thebibliography}

\end{document}